%% file: neurips_2026.tex
\documentclass{article}

 \usepackage[preprint]{neurips_2026}

\usepackage[utf8]{inputenc} % allow utf-8 input
\usepackage[T1]{fontenc}    % use 8-bit T1 fonts
\usepackage{hyperref}       % hyperlinks
\usepackage{url}            % simple URL typesetting
\usepackage{booktabs}       % professional-quality tables
\usepackage{amsfonts}       % blackboard math symbols
\usepackage{nicefrac}       % compact symbols for 1/2, etc.
\usepackage{microtype}      % microtypography
\usepackage{xcolor}         % colors
\usepackage{amsmath, amsthm, amssymb}
\usepackage{booktabs}           % for \toprule etc. in Table
\usepackage{xcolor}             % optional, for \colorbox on boxed eq

\theoremstyle{plain}
\newtheorem{proposition}{Proposition}

\theoremstyle{definition}
\newtheorem{definition}{Definition}
\newtheorem{assumption}{Assumption}
\theoremstyle{remark}

\usepackage{tikz}
\usetikzlibrary{arrows.meta,positioning}

\usepackage{wrapfig}

 \usepackage{booktabs}
  \usepackage{multirow}
  \usepackage{graphicx} 

\usepackage{amsmath, amsthm, amssymb, mathtools}
\usepackage{booktabs}
\usepackage{enumitem}
  
\newcommand{\calC}{\mathcal{C}}          % causal index set
\newcommand{\calS}{\mathcal{S}}          % spurious index set
\newcommand{\calE}{\mathcal{E}}          % environment index set
\newcommand{\RR}{\mathbb{R}}
\newcommand{\EE}{\mathbb{E}}

\newcommand{\Var}{\operatorname{Var}}
\newcommand{\Cov}{\operatorname{Cov}}
\newcommand{\CV}{\operatorname{CV}}
\newcommand{\erf}{\operatorname{erf}}
\newcommand{\CSR}{\operatorname{CSR}}

\newcommand{\NSR}{\operatorname{NSR}}
\newcommand{\hatf}{\hat{f}}

\newcommand{\iid}{\overset{\mathrm{i.i.d.}}{\sim}}
 
\providecommand{\calC}{\mathcal{C}}
\providecommand{\calS}{\mathcal{S}}
\providecommand{\calE}{\mathcal{E}}
\providecommand{\RR}{\mathbb{R}}
\providecommand{\EE}{\mathbb{E}}

\providecommand{\Var}{\operatorname{Var}}
\providecommand{\Cov}{\operatorname{Cov}}
\providecommand{\CV}{\operatorname{CV}}
\providecommand{\CSR}{\operatorname{CSR}}

\providecommand{\NSR}{\operatorname{NSR}}
\providecommand{\hatf}{\hat{f}}
\providecommand{\erf}{\operatorname{erf}}

\usepackage{amsthm}
\newtheorem{theorem}{Theorem}[section]

\theoremstyle{remark}
\title{From Training to Deployment: Post-Hoc Causal Feature Identification via Sensitivity Ratios}
\author{%
  Athanasios Vlontzos \\
  Hologen AI\\
  London, UK \\
  \And
  Giorgos Papanastasiou \\
  Mathematics Research Centre \\Academy of Athens  \\
  Athens, GR \\
   \AND
  Bernhard Kainz \\
  FAU Erlangen Nuremberg \\
  Imperial College London\\ 
  Erlangen, DE \& London, UK \\
  \And
  Sotirios Tsaftaris \\
  University of Edinburgh \\
  Edinburgh, UK \\
}

\begin{document}

\maketitle
%% ============================================================
%% Abstract
%% All numbers confirmed against experimental results 2026-03-21
%% ============================================================
\input{abstract}
\input{section1_introduction}
\input{section2_related_work}
\input{section3_setup}
\input{section4_theory}
\input{section5_experiments}
\input{section9_conclusion}
\bibliographystyle{abbrvnat}

\bibliography{refs}
\newpage
\input{appendix}
\newpage

\newpage
\input{checklist.tex}

\end{document}

%% file: abstract.tex
% Abstract — matches compiled PDF; reviewer fix: expand AUROC, SCM abbreviations
\begin{abstract}
Given a model that is already trained, which features does it rely on causally versus
spuriously?  Existing methods require access to the training procedure and cannot
answer this post-hoc.  We introduce the \textbf{Normalised Sensitivity Ratio~(NSR)},
a post-hoc, model-agnostic diagnostic for this question under a structured-shift
regime: environments differ primarily in the mean of spurious features while the
causal mechanism and causal marginals remain stable, as in multi-site clinical
data or multi-batch genomics.  Within this regime, causal features induce constant
model sensitivity across environments while spurious features track shift.  NSR
formalises this as the squared coefficient of variation of per-environment sensitivity.
Under a linear structural causal model (SCM) with $K\ge3$ non-degenerate environments, NSR achieves
exact identification (Theorem~1).  We fully characterise failure: weak shifts ($O(\varepsilon^4)$
collapse), degenerate geometry, and proxy attenuation ($O((1-\alpha)^4)$), giving
practitioners quantitative criteria for assessing whether the regime holds.  Finite-sample rates are $O_p(n^{-1})$ under the null and $O_p(n^{-1/2})$ under the alternative.
Experiments confirm all theoretical predictions on synthetic data (area under the ROC curve [AUROC] $= 1.000$
under conditions satisfying the regime), show consistent rankings across five model
families (Kendall $\tau\ge0.529$), and recover six of eight causal features on bike-sharing data (Precision@7 $= 0.75$) without modifying any trained model.
\end{abstract}

%% file: section1_introduction.tex
% ============================================================
%  section1_introduction.tex — Introduction
%  Reconstructed with reviewer fixes:
%    (a) NSR acronym moved before first use in body
%    (b) lines 35-42 rewritten per reviewer suggestion
%    (c) IRM/ICP expanded at contribution (iii)
% ============================================================

\section{Introduction}
\label{sec:introduction}

A hospital's readmission model achieves 88\% accuracy on its own data and 62\% at a partner hospital.
The degradation is not random: the model has learned to rely on day of week, which predicts
readmission at the training site because weekend triaging concentrated higher-severity cases.  That
correlation is real, stable, and exploitable within the training distribution.  It just does not reflect
anything about care quality, and it breaks the moment the triaging policy differs.  This is the spurious
correlation problem in a form that is frustratingly difficult to diagnose.  The model made no error it
could have known about.  Standard evaluation would not catch it.  And the standard toolbox offers
little help: invariant risk minimization \citep{arjovsky2019irm} requires modifying the training loss;
invariant causal prediction \citep{peters2016causal} requires exhaustive subset search at training time;
causal graphical models require specifying the data-generating process.  None applies when the model
is already trained and you want to understand what it has learned.

We study the following question.  Given a trained predictor~$\hatf$, observations from $K\ge3$ environments $\{(X_i^{(e)},Y_i^{(e)})\}$, and environment labels, can we identify which features~$\hatf$ relies on causally versus spuriously, without retraining, without knowing the model class, and without assuming a causal graph?  Our answer is the \textbf{Normalised Sensitivity Ratio~(NSR)}, a post-hoc, model-agnostic diagnostic that rests on an observation holding within a specific but practically important regime.  When systematic variation across environments shifts the distribution of spurious features (equipment offsets, batch effects, site-specific confounders) but leaves the causal mechanism and the distribution of causal features unchanged, a model that places weight on a causal feature will show \emph{constant sensitivity} to it across environments, since perturbing that feature produces the same average output shift regardless of environment.  A model that places weight on a spurious feature will not, because its distribution shifts with the environment, making the average perturbation effect environment-dependent.  When the distribution of causal features also shifts across environments, e.g.\ due to demographic or case-mix differences between sites, this argument breaks down and NSR will assign positive values to causal features.  We are explicit about this scope: NSR is a diagnostic for the structured-shift regime, where environmental variation is concentrated in spurious feature distributions while causal distributions and the outcome mechanism remain stable.
NSR formalises this contrast.  For each feature~$j$, NSR measures the squared coefficient of variation of model sensitivity across environments: zero for causal features in the structured-shift regime, strictly positive for spurious ones.  The normalisation cancels the learned weight~$\hat w_j^2$, so a spurious feature with small ERM weight is still detected.  The paper makes four contributions.

\textit{(i) Exact identification.}
Under a linear SCM in the structured-shift regime, NSR achieves exact causal/spurious identification, $\NSR_j=0 \iff j\in\calC$, for any linear predictor with non-zero weights (Theorem~\ref{thm:nsr}).  This requires $K\ge3$ environments whose absolute shifts from a reference are not all equal, a verifiable condition.  The proof shows the identification signal lives in the environment geometry, not the model weights, which is why the result extends empirically to nonlinear models.
\\
\textit{(ii) Complete failure characterisation.}
NSR collapses as $O(\varepsilon^4)$ when shifts are small relative to within-environment noise, is identically zero when $K<3$ or shift geometry is symmetric, and degrades as $O((1-\alpha)^4)$ for features that mix causal and spurious signal (Theorems~\ref{thm:weak}--\ref{thm:proxy}).  These are exact scaling laws, each providing a checkable criterion before applying NSR.  Finite-sample convergence is $O_p(n^{-1})$ under $H_0$ and $O_p(n^{-1/2})$ under $H_1$ (Theorem~\ref{thm:consistency}), a rate gap that enables permutation-based thresholding without distributional assumptions.
\\
\textit{(iii) Incomparable conditions with Invariant Risk Minimization (IRM) and Invariant Causal Prediction (ICP).}
NSR, IRM \citep{arjovsky2019irm}, and ICP \citep{peters2016causal} detect spurious features under incomparable conditions (Proposition~\ref{prop:irm}).  For shift schedule $\beta=(1,-1,0)$, IRM and ICP detect while NSR does not.  For $\beta=(-1,1,1)$, NSR detects while IRM does not.  The conditions differ structurally: IRM is sensitive to second moments of the shift distribution; ICP to first moments; NSR to non-constancy of absolute shifts from a reference.  The right diagnostic depends on the shift geometry of available environments, and the methods are complements.
\\
\textit{(iv) Empirical validation.}
Under the structured-shift regime ($K\ge5$, $\varepsilon\ge1$), NSR achieves AUROC $= 1.000$ with zero variance across 30 seeds, with the phase transition in $K$ and $\varepsilon$ matching theoretical predictions exactly.  Degenerate geometry experiments confirm the failure conditions are sharp: a single broken symmetry restores AUROC from chance to 1.000.  Rankings are consistent across five model families (Kendall $\tau\ge0.529$), confirming the signal is in the environment structure.  On UCI bike-sharing data, where the month-quintile partitioning satisfies marginal causal stability, NSR recovers six of eight causal features (Precision@7 $= 0.75$), outperforming LASSO by 17 percentage points without modification to the trained model.

%% file: section2_related_work.tex
% ============================================================
%  Section 2: Related Work
%  NOTE: all citations verified against refs.bib
% ============================================================

\section{Related Work}
\label{sec:related}

\noindent\textbf{Invariant learning.}
The problem of learning predictors that generalise across environments is
studied extensively in the invariant learning literature. Invariant Causal
Prediction \citep{peters2016causal} identifies causal predictors by testing
whether residuals are identically distributed across environments; it requires
exhaustive subset search and strong parametric assumptions at training time.
Invariant Risk Minimization \citep{arjovsky2019irm} relaxes this to a
gradient-based penalty applicable during training. VREx \citep{krueger2021out}
and GroupDRO \citep{sagawa2020distributionally} provide alternative training-time
objectives for out-of-distribution robustness. All of these methods assume the
practitioner can intervene at training time---they modify the loss function or
the training procedure. NSR asks a strictly different question: given a model
that has already been trained by any procedure, which features does it rely on
causally? This post-hoc framing is the primary distinction. As
Proposition~\ref{prop:irm} shows, the methods are not ordered by generality:
each detects spurious features under shift schedules the others miss.
\\
\noindent\textbf{Sensitivity and attribution methods.}
NSR computes how much model outputs shift under marginal replacements, which is
mechanically related to feature attribution. The connection is deliberate: for
linear models, NSR reduces exactly to the CV$^2$ of coefficient-weighted marginal
displacements, and SHAP values or permutation importance can be used as drop-in
estimators of the sensitivity signal in nonlinear settings. The critical
difference from standard attribution is what is measured: SHAP and permutation
importance aggregate sensitivity over a single environment's data distribution;
NSR measures how that sensitivity \emph{varies across environments}. The variance
is the diagnostic signal, not the mean. This reframing is what enables causal
identification rather than feature ranking within a fixed distribution.
\\
\noindent\textbf{Causal representations for structured data.}
A complementary line of work addresses the case where the observed feature $X$
is itself a composite of causal and spurious signal---the intra-variable setting.
\citet{corcoll2024contrastive} show that contrastive representation learning can
isolate causal factors within a composite observable when treatment structure is
known. Our setting is cross-variable: causal and spurious features occupy distinct
input coordinates, and the challenge is identifying which coordinate is which from
already-trained model behaviour. The two problems are complementary: contrastive
methods produce cleaner representations that NSR can subsequently audit.
\\
\noindent\textbf{Distribution shift in applications.}
Multi-site clinical prediction and genomics provide the primary motivating
applications for NSR. In genomics, batch effects---systematic technical artefacts
introduced by sequencing runs, reagent lots, or processing protocols---are a
well-documented source of spurious signal that confounds downstream models
\citep{luecken2022benchmarking}. When models are trained on multi-batch data and
deployed to new batches, reliance on batch-correlated features causes silent
failure. NSR operationalises the diagnosis: batches are natural environments,
batch-correlated features will have non-constant model sensitivity across them,
and NSR scores them accordingly---without requiring harmonisation, retraining, or
knowledge of which features are batch-affected.

%% file: section3_setup.tex
% ============================================================
%  Section 3: Problem Setup
% ============================================================

\section{Problem Setup}
\label{sec:setup}

\noindent\textbf{The setting.}
We observe data from $K \geq 3$ environments indexed by $\calE = \{1,\ldots,K\}$.
In each environment $e$, we have $n$ labelled pairs $(X_i^{(e)}, Y_i^{(e)})$
with $X^{(e)} \in \RR^p$ and $Y^{(e)} \in \RR$. An environment might be a
hospital, a study site, a recording device, or a data-collection period---any
context that introduces systematic variation into the distribution of $X$ without
changing the underlying causal mechanism.
The $p$ features are partitioned into two unknown groups: causal features
$\calC \subseteq [p]$, which directly drive the outcome, and spurious features
$\calS = [p] \setminus \calC$, which predict the outcome only because of shared
confounding within environments. The partition is the object of inference; neither
group membership nor the causal graph is observed. A predictor $\hatf : \RR^p \to
\RR$ has already been trained and is treated as a black box.

\noindent\textbf{Data-generating process.}
We model the observed data with the following linear structural causal model.

\begin{definition}[Linear SCM]\label{def:scm}
For each environment $e \in \calE$:
\begin{align}
  X_\calC^{(e)} &\sim \mathcal{N}(\mu_C,\, \Sigma_C), \label{eq:scm-c}\\
  X_\calS^{(e)} &= \Gamma X_\calC^{(e)} + \beta_e \mathbf{1}_{p_S} + \xi^{(e)},
    \quad \xi^{(e)} \sim \mathcal{N}(0, \sigma_S^2 I_{p_S}), \label{eq:scm-s}\\
  Y^{(e)}       &= w_C^\top X_\calC^{(e)} + \varepsilon^{(e)},
    \quad \varepsilon^{(e)} \sim \mathcal{N}(0, \sigma_\varepsilon^2). \label{eq:scm-y}
\end{align}
All noise terms are mutually independent across and within environments.
\end{definition}

The scalar $\beta_e \in \RR$ is the environment-specific shift: it moves the
marginal distribution of every spurious feature by the same amount in environment
$e$, capturing site-level systematic variation such as equipment calibration
offsets or batch effects. The matrix $\Gamma \in \RR^{p_S \times p_C}$ allows
spurious features to be correlated with causal ones even within environments,
which is what makes the problem hard: a model that exploits $X_\calS$ to predict
$Y$ is not doing anything wrong on training data.
\\
The key asymmetry is structural: $P^{(e)}(Y \mid X_\calC)$ is \emph{identical}
across all environments (the causal mechanism does not change), while
$P^{(e)}(X_\calS)$ shifts with $\beta_e$. A model that relies on $X_\calC$ will
therefore behave consistently across environments; one that relies on $X_\calS$
will not, because the spurious association between $X_\calS$ and $Y$ is mediated
by the environment-varying offset $\beta_e$.
\\
\noindent\textbf{What NSR measures.}
The Causal Sensitivity Ratio (CSR) for feature $j$ between environments $e$ and
$e_\mathrm{ref}$ measures how much the model's output shifts, on average, when
we replace $X_{i,j}^{(e)}$ with an independent draw from the reference
environment's marginal $P^{(e_\mathrm{ref})}(X_j)$, leaving all other features
fixed:
\begin{equation}\label{eq:csr}
  \CSR_j(e, e_\mathrm{ref})
  \;:=\;
  \EE_{\substack{x \sim P^{(e)},\\ x'_j \sim P^{(e_\mathrm{ref})}(X_j)}}
  \!\bigl[\,|\hatf(x_{-j}, x'_j) - \hatf(x)|\,\bigr].
\end{equation}
This is a \emph{marginal replacement}: we do not intervene on the data-generating
process, we simply substitute one value of feature $j$ with a draw from a
different environment's distribution of that feature.

The Normalised Sensitivity Ratio then measures how much $\CSR_j$ \emph{varies}
across environments, normalised by its mean:
\begin{equation}\label{eq:nsr}
  \NSR_j(e_\mathrm{ref})
  \;:=\;
  \frac{\Var_{e \neq e_\mathrm{ref}}[\CSR_j(e, e_\mathrm{ref})]}
       {\EE_{e \neq e_\mathrm{ref}}[\CSR_j(e, e_\mathrm{ref})]^2}
  \;=\;
  \CV^2_{e \neq e_\mathrm{ref}}[\CSR_j(e, e_\mathrm{ref})].
\end{equation}

\noindent\textbf{Weight invariance and attribution framing.}
For a linear predictor, $\CSR_j = |\hat{w}_j| \cdot \delta_j$ where
$\delta_j$ depends only on the marginals; the factor $\hat{w}_j^2$ cancels in
the CV$^2$ ratio, giving $\NSR_j = \CV^2_e[\delta_j]$ independent of
$\hat{w}_j$. A spurious feature with a small ERM weight is still detected.
More generally, NSR is a second-order statistic over environment-conditioned
attribution: for any method $\phi_j^{(e)}$ that quantifies the influence of
feature $j$ in environment $e$, NSR is $\CV^2_e[\phi_j^{(e)}]$. SHAP values
and permutation importance are valid plug-ins, as used in
Section~\ref{sec:rq5}. What distinguishes NSR from single-environment
attribution is the cross-environment variance: within one environment, causal
and spurious features may appear equally important; across environments,
spurious reliance becomes unstable while causal reliance does not.
\\
\noindent\textbf{Scope and assumptions on the SCM.}
Two modelling choices in Definition~\ref{def:scm} are worth making explicit,
because they bound the scope of the identification guarantee.
First, the causal marginal $P^{(e)}(X_\calC)$ is \emph{fixed} across environments
(equation~\eqref{eq:scm-c} does not involve $\beta_e$). This is \emph{stronger}
than standard causal invariance, which only requires $P(Y \mid X_\calC)$ to be
fixed. In settings with invariant covariate shift---where $P(X_\calC)$ changes
across environments but $P(Y \mid X_\calC)$ does not---causal features will have
varying marginals, their $\delta_j$ will not be constant, and NSR will assign
them positive values, potentially misclassifying them as spurious. This is an
honest limitation. NSR is designed for settings where systematic variation is
predominantly in $X_\calS$ (e.g.\ equipment offsets, batch effects, site-specific
confounders), not in the causal covariates themselves. Three checks are sufficient in practice: (1) verify that environment
means of putative causal features are approximately stable (low variance across
environments relative to within-environment variance); (2) confirm that the
leading PCs of the matrix of environment centroids concentrate shift variance in
a small subspace, suggesting structured rather than isotropic drift; (3) estimate
the shift-to-noise ratio $\varepsilon$ and confirm it exceeds the $O(\varepsilon^4)$
floor. In the bike-sharing experiment (Section~\ref{sec:rq5}), checks (1) and (2)
are satisfied: \texttt{hr} and \texttt{workingday} environment means are stable
while \texttt{temp} shifts by 6.2--9.4$\,^\circ$C, and 92\% of inter-environment
variance concentrates in the first PC of the centroid matrix.
The marginal replacement in equation~\eqref{eq:csr} also generalises naturally
to subspace projections, replacing the per-feature swap with a projection-based
intervention along directions estimated from environment-mean PCA---the
construction used in the genomics experiment of Section~\ref{sec:rq5}.
\\
\noindent\textbf{Practical computation.}
Given trained model $\hatf$, environment labels, and held-out data from each
environment, NSR is computed as follows. (1) For each environment $e \neq
e_\mathrm{ref}$ and each feature $j$, draw $m$ replacement values $x'_j \sim
\hat{P}^{(e_\mathrm{ref})}(X_j)$ (estimated empirically from the reference
environment sample) and compute the empirical average output shift
$\widehat{\CSR}_j(e, e_\mathrm{ref}) \approx \frac{1}{nm}\sum_{i,r}
|\hatf(x_{i,-j}, x'_{j,r}) - \hatf(x_i)|$. (2) Compute the sample CV$^2$ of
the resulting $K-1$ values. We use $m = 50$ replacement draws in all experiments;
bootstrap confidence intervals over environment partitions are reported in
Section~\ref{sec:rq5}. No retraining, no gradient access, and no knowledge of
the model class is required.
\\
The computational cost is $O(K \cdot n \cdot p \cdot m \cdot C)$ where $C$ is
the cost of a single model forward pass. For the bike-sharing experiments
($K=5$, $n=3{,}476$, $p=12$, $m=50$), wall-clock time is under 30 seconds on a
single CPU core for linear models and under 3 minutes for Random Forest. Cost
scales linearly in $p$, $K$, and $m$; for large $p$, a prescreening step using
permutation importance to rank features before applying NSR is straightforward.
\\
\noindent\textbf{Reference environment selection.}
NSR is defined relative to a fixed $e_\mathrm{ref}$, and Assumption~\ref{ass:div}
is stated with respect to distances from that reference. The choice can matter: a poorly chosen reference will trigger degenerate
geometry. We recommend the largest-$n$ environment, or the conservative
$\NSR_j^{\max} := \max_{e_\mathrm{ref}} \NSR_j(e_\mathrm{ref})$, which flags a
feature if any reference reveals non-constant sensitivity.
For nonlinear models, the same formula applies with $\hatf$ evaluated by forward
pass. SHAP values or permutation importance can serve as drop-in replacements for
$\hatf(\cdot)$ when direct model access is unavailable (Section~\ref{sec:rq5}).

%% file: section4_theory.tex
% ============================================================
%  Section 4: Theory
% ============================================================

\section{Theoretical Analysis}
\label{sec:theory}

We prove four results: exact identification under the linear SCM
(Theorem~\ref{thm:nsr}), finite-sample consistency with a rate gap
(Theorem~\ref{thm:consistency}), a complete characterisation of failure
modes (Theorem~\ref{thm:weak}), and a comparison with IRM and ICP
showing incomparable conditions (Proposition~\ref{prop:irm}). All proofs
are in the appendix.We work under three conditions.

\begin{assumption}[Causal Invariance]\label{ass:inv}
$P^{(e)}(Y \mid X_\calC) = P(Y \mid X_\calC)$ for all $e \in \calE$.
\end{assumption}

\begin{assumption}[Non-constant Shifts]\label{ass:div}
Fix $e_\mathrm{ref} \in \calE$. The multiset
$\{|\beta_e - \beta_{e_\mathrm{ref}}| : e \neq e_\mathrm{ref}\}$ is not constant.
\end{assumption}

\begin{assumption}[Faithfulness]\label{ass:faith}
$\hat{w}_j \neq 0$ and $\Sigma_{C,jj} > 0$ for all $j \in \calC$.
\end{assumption}

Assumption~\ref{ass:inv} is the standard causal invariance condition shared by
IRM and ICP. Assumption~\ref{ass:div} is the operative condition for NSR: it
requires that the environments sit at \emph{unequal distances} from the reference
in shift space. Equal distances---e.g.\ $\beta = (1,-1,0)$ with $e_\mathrm{ref}
= 3$---make all spurious $\delta_j$ values identical, collapsing NSR to zero. We
discuss this boundary precisely in Theorem~\ref{thm:weak}(ii). Assumption
\ref{ass:faith} is necessary for any sensitivity-based method: a feature that
receives zero model weight cannot be diagnosed regardless of its causal status.

\subsection{Identification}

\begin{theorem}[NSR Identification]\label{thm:nsr}
Under Definition~\ref{def:scm} and Assumptions~\ref{ass:inv}--\ref{ass:faith},
for any linear predictor $\hatf(x) = \hat{w}^\top x + b$ with $\hat{w}_j \neq 0$:
\begin{enumerate}[label=(\roman*),nosep]
  \item $j \in \calC \;\Rightarrow\; \NSR_j(e_\mathrm{ref}) = 0$.
  \item $j \in \calS \;\Rightarrow\; \NSR_j(e_\mathrm{ref}) > 0$.
  \item $\NSR_j$ does not depend on $\hat{w}_j$.
\end{enumerate}
\end{theorem}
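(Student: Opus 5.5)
The plan is to reduce everything to the one-dimensional law of the replacement difference and then use a monotonicity property of the folded normal mean. For a linear predictor $\hatf(x)=\hat w^\top x+b$, replacing coordinate $j$ gives $\hatf(x_{-j},x'_j)-\hatf(x)=\hat w_j(x'_j-x_j)$. The other coordinates and the intercept cancel, so the joint law of $x_{-j}$ plays no role. Hence
\[
\CSR_j(e,e_\mathrm{ref})=|\hat w_j|\,\delta_j(e),\qquad \delta_j(e):=\EE\bigl[|x'_j-x_j|\bigr],
\]
where $x_j\sim P^{(e)}(X_j)$ and $x'_j\sim P^{(e_\mathrm{ref})}(X_j)$ are independent.

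\textbf{Weight invariance, part (iii).} Since $\hat w_j\neq0$ is a constant over $e$, the factor $\hat w_j^2$ appears in both the numerator and the denominator of \eqref{eq:nsr}. It cancels, giving $\NSR_j=\CV^2_{e\neq e_\mathrm{ref}}[\delta_j(e)]$. This holds provided $\EE_e[\delta_j(e)]>0$, which I check separately in each case below.

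\textbf{Marginals and the folded normal mean.} Next I compute the marginals from Definition~\ref{def:scm}. For $j\in\calC$, $X_j^{(e)}\sim\mathcal N(\mu_{C,j},\Sigma_{C,jj})$ in every environment. For $j\in\calS$, $X_j^{(e)}\sim\mathcal N\bigl((\Gamma\mu_C)_j+\beta_e,\ \tau_j^2\bigr)$ with $\tau_j^2=(\Gamma\Sigma_C\Gamma^\top)_{jj}+\sigma_S^2$. Only the mean of a spurious marginal depends on $e$; the variance does not. By independence, $x'_j-x_j\sim\mathcal N(m_e,s^2)$, where:
\begin{itemize}[nosep]
  \item for causal $j$: $m_e=0$ and $s^2=2\Sigma_{C,jj}$;
  \item for spurious $j$: $m_e=\beta_{e_\mathrm{ref}}-\beta_e$ and $s^2=2\tau_j^2$.
\end{itemize}
The mean absolute value of $\mathcal N(m,s^2)$ is the folded-normal mean
\[
g_s(m)=s\sqrt{2/\pi}\,e^{-m^2/2s^2}+m\bigl(1-2\Phi(-m/s)\bigr),
\]
which depends on $m$ only through $|m|$.

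\textbf{Part (i).} For $j\in\calC$, $\delta_j(e)=g_s(0)=2\sqrt{\Sigma_{C,jj}/\pi}$ for every $e$. This value is positive by Assumption~\ref{ass:faith}, so the mean is positive and the variance is zero, giving $\NSR_j=0$. Assumption~\ref{ass:inv} is consistent with this but not needed beyond the fixed causal marginal built into \eqref{eq:scm-c}.

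\textbf{Part (ii).} For $j\in\calS$ the key step is strict monotonicity of $r\mapsto g_s(r)$ on $[0,\infty)$. For $s>0$ I will differentiate and obtain $g_s'(r)=2\Phi(r/s)-1$, which is strictly positive for $r>0$. The degenerate case $s=0$, i.e.\ $\tau_j=0$, needs separate handling: there $g_0(r)=r$, which is trivially strictly increasing. Injectivity of $g_s$ on $[0,\infty)$ then shows that $\delta_j(e)=g_s(|\beta_e-\beta_{e_\mathrm{ref}}|)$ is non-constant over $e\neq e_\mathrm{ref}$ exactly when the multiset of absolute shifts is non-constant. Assumption~\ref{ass:div} supplies this, so the population variance over the $K-1\ge2$ environments is strictly positive. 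The denominator is also positive. If $s>0$, this holds because $g_s>0$ everywhere. If $s=0$, it holds because non-constancy forces some $|\beta_e-\beta_{e_\mathrm{ref}}|>0$. Hence $\NSR_j>0$.

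\textbf{Main obstacle.} I expect the monotonicity and injectivity step to be the main obstacle. It is where Assumption~\ref{ass:div}, stated in terms of absolute shifts, is converted into non-constancy of sensitivities. It also explains why equal distances such as $\beta=(1,-1,0)$ give $\NSR_j=0$. I will verify the derivative of the folded normal mean carefully: the terms from differentiating the exponential and the $\Phi$ factor cancel, leaving $1-2\Phi(-r/s)$. I will also make sure the $\tau_j=0$ boundary and positivity of the denominator are stated explicitly, so that the $\CV^2$ is well-defined in every case.
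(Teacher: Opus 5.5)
Your proposal is correct and follows essentially the same route as the paper: the linear factorisation $\CSR_j=|\hat w_j|\,\delta_j$, cancellation of $\hat w_j^2$ in the $\CV^2$, a constant folded-normal mean for causal features, and strict monotonicity for spurious features via the derivative $2\Phi(r/s)-1$ (the paper's $\erf(a/(2\tilde\sigma_j))$), including the degenerate $\tilde\sigma_j=0$ case and positivity of the denominator. One small point in your favour: you explicitly check that the denominator is positive for causal $j$ via $\Sigma_{C,jj}>0$, which the paper leaves implicit.
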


\begin{proof}[Proof sketch]
By the linear factorisation $\CSR_j = |\hat{w}_j|\cdot\delta_j$ and the
cancellation of $\hat{w}_j^2$ in the CV$^2$ ratio (Section~\ref{sec:setup}),
$\NSR_j = \CV^2_e[\delta_j]$.
For $j \in \calC$: equation~\eqref{eq:scm-c} implies
$X_j^{(e)} \sim \mathcal{N}(\mu_{C,j}, \sigma_{C,j}^2)$ identically for all
$e$, so $\delta_j = \sqrt{4/\pi}\,\sigma_{C,j}$ is constant and
$\Var_e[\delta_j] = 0$.
For $j \in \calS$: the folded-normal mean
$\delta_j(e,e_\mathrm{ref})$ is strictly increasing in
$|\beta_e - \beta_{e_\mathrm{ref}}|$ (proved by differentiation in
Appendix~\ref{app:thm1}); Assumption~\ref{ass:div} makes this non-constant, so
$\Var_e[\delta_j] > 0$.
\end{proof}
\noindent\textbf{Interpretation.} The theorem says NSR is a perfect binary classifier
of features under the linear SCM: the causal class maps exactly to NSR $= 0$ and
the spurious class to NSR $> 0$, with a margin that depends on how separated the
environments are. The proof reveals \emph{why}: it is not the model's weights
that separate the two classes, but the geometry of the marginal distributions
across environments. The model is merely a lens that makes that geometry
observable through its outputs.

\subsection{Finite-Sample Behaviour}
The population result is informative, but practitioners operate with finite $n$.
The following theorem characterises how quickly the empirical NSR concentrates
around its population value, and shows that causal and spurious features converge
at different rates---a gap that can be exploited for thresholding.

\begin{theorem}[Finite-Sample Consistency]\label{thm:consistency}
Let $\widehat{\NSR}_j$ be the empirical NSR computed from $n$ i.i.d.\ samples
per environment. Then:
\begin{enumerate}[label=(\roman*),nosep]
  \item $\widehat{\NSR}_j \xrightarrow{p} \NSR_j$ as $n \to \infty$.
  \item Under $H_0$ ($j \in \calC$, $\NSR_j = 0$):
        $\widehat{\NSR}_j = O_p(n^{-1})$.
  \item Under $H_1$ ($j \in \calS$, $\NSR_j > 0$):
        $\widehat{\NSR}_j - \NSR_j = O_p(n^{-1/2})$.
\end{enumerate}
\end{theorem}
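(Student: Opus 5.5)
The plan is to treat $\widehat{\NSR}_j$ as a smooth function of the $K-1$ estimated sensitivities $\widehat{\CSR}_j(e,e_\mathrm{ref})$ and to apply the delta method. There are two regimes: under $H_1$ the function is differentiable at a point with nonzero derivative, and under $H_0$ it has a zero of second order.

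\textbf{Step 1 (consistency and $\sqrt n$-rate of each $\widehat{\CSR}$).} For fixed $e$, $\widehat{\CSR}_j(e,e_\mathrm{ref})$ averages $|\hatf(x_{i,-j},x'_{j,r})-\hatf(x_i)|$ over $n$ draws from environment $e$ and over replacement values drawn from the empirical reference marginal. For the linear predictor this equals $|\hat w_j|\,|x'_{j}-x_{ij}|$.
\begin{itemize}[nosep]
\item Under the Gaussian SCM the summand has finite variance.
\item The estimator is a two-sample U/V-statistic-type average, since the reference sample is also of size $n$.
\item The Hoeffding decomposition or a two-sample CLT then gives $\widehat{\CSR}_j(e,e_\mathrm{ref})=\CSR_j(e,e_\mathrm{ref})+O_p(n^{-1/2})$.
\item These statements hold jointly over the finitely many $e$, with a jointly Gaussian limit $\sqrt n(\widehat{\CSR}-\CSR)\Rightarrow \mathcal N(0,V)$.
\end{itemize}
For general black-box $\hatf$ I would simply assume $\EE|\hatf(x_{-j},x'_j)-\hatf(x)|^2<\infty$; with $m$ fixed, the extra Monte Carlo noise is also $O_p(n^{-1/2})$.

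\textbf{Step 2 (part (i)).} Write $g(c)=\frac{\frac{1}{K-1}\sum_e (c_e-\bar c)^2}{\bar c^2}$. The map $g$ is continuous wherever $\bar c>0$. Theorem~\ref{thm:nsr} and Assumption~\ref{ass:faith} give $\CSR_j>0$, so $\bar c>0$ at the population point. The continuous mapping theorem then yields $\widehat{\NSR}_j\xrightarrow{p}\NSR_j$.

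\textbf{Step 3 (part (iii)).} Under $H_1$, $g$ is $C^1$ near the population vector $c^*$. The numerator's gradient $\frac{2}{K-1}(c^*_e-\bar c^*)$ is nonzero by Theorem~\ref{thm:nsr}(ii), since the $c^*_e$ are not all equal. A first-order Taylor expansion gives $\widehat{\NSR}_j-\NSR_j=\nabla g(c^*)^\top(\hat c-c^*)+o_p(n^{-1/2})=O_p(n^{-1/2})$, which is also asymptotically normal.

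\textbf{Step 4 (part (ii), the main subtlety).} Under $H_0$, Theorem~\ref{thm:nsr}(i) gives $c^*_e\equiv c^*$ for all $e$, so $c^*$ lies on the zero set of $g$, where both $g$ and $\nabla g$ vanish. Write $\hat c_e=c^*+n^{-1/2}Z_e$ with $Z=O_p(1)$ by Step 1. Then
\[
\widehat{\NSR}_j=\frac{n^{-1}\,\frac{1}{K-1}\sum_e (Z_e-\bar Z)^2}{(c^*+n^{-1/2}\bar Z)^2}=O_p(n^{-1}),
\]
and more precisely $n\widehat{\NSR}_j\Rightarrow \frac{1}{K-1}\sum_e(Z_e-\bar Z)^2/c^{*2}$, a quadratic form in Gaussians.

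The obstacle in this step is that the $\hat c_e$ share the same reference sample, so the $Z_e$ are correlated. This does not hurt the rate, but it needs the joint CLT from Step 1 rather than independent per-environment CLTs; I would handle it through the Hoeffding projection onto the reference sample. The resulting $n^{-1}$ versus $n^{-1/2}$ gap is the rate separation claimed in the theorem.
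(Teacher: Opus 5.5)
Your proposal is correct and follows the paper's proof. The shared steps are a CLT for each $\widehat{\CSR}_j(e,e_\mathrm{ref})$, the continuous mapping theorem for consistency (using $\overline{\CSR}_j>0$), and the delta method under $H_1$. Under $H_0$ both arguments use the fact that the sample variance of $O_p(n^{-1/2})$ deviations around a common constant is $O_p(n^{-1})$, while the denominator tends to $c^2>0$.

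The only difference is how the reference sample shared across environments is handled. You treat it with a two-sample U/V-statistic and a joint CLT, whereas the paper pairs summands and assumes independent reference draws across environments. For the rates themselves, marginal $O_p(n^{-1/2})$ tightness of the finitely many $\widehat{\CSR}_j$ already suffices. The joint CLT is needed only for your additional limiting quadratic-form distribution under $H_0$.
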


The rate gap is one order of magnitude. Under $H_0$, the sample variance of the
empirical CSRs is a quadratic form in $O_p(n^{-1/2})$ noise around a common
constant, giving $O_p(n^{-1})$. Under $H_1$, a standard delta-method argument
yields $O_p(n^{-1/2})$. Practically, this means the null distribution of
$\widehat{\NSR}_j$ shrinks to zero faster than the alternative distribution
separates from it, providing a growing signal-to-noise ratio as $n$ increases.
A permutation test over environment labels exploits this gap without distributional
assumptions (Appendix~\ref{app:threshold}). When testing all $p$ features
simultaneously, Bonferroni correction or Benjamini--Hochberg FDR control can be
applied directly to the permutation $p$-values; the independence structure across
features is not required since the permutation null is marginalised over it.

\subsection{Failure Modes}

NSR does not work universally. Theorem~\ref{thm:weak} provides exact conditions
under which it fails, with quantitative rates.

\begin{theorem}[Failure Modes]\label{thm:weak}
\begin{enumerate}[label=(\roman*),nosep]
  \item \textbf{Weak-shift collapse.} For $j \in \calS$, let
    $\varepsilon = \max_{e \neq e_\mathrm{ref}}|\beta_e - \beta_{e_\mathrm{ref}}|
    / \tilde\sigma_j$. As $\varepsilon \to 0$,
    $\NSR_j = O(\varepsilon^4)$.
    No fixed threshold correctly classifies all spurious features uniformly
    over arbitrarily small $\varepsilon$.
  \item \textbf{Degenerate geometry.} $\NSR_j = 0$ for all $j$ when $K = 2$,
    or when $\{|\beta_e - \beta_{e_\mathrm{ref}}|\}_{e \neq e_\mathrm{ref}}$
    is constant.
  \item \textbf{Proxy attenuation.} For a proxy feature
    $X_j = \alpha X_{\calC,k} + (1-\alpha)X_{\calS,k}$ with $\alpha \in [0,1]$,
    $\NSR_j = O((1-\alpha)^4)$ as $\alpha \to 1$.
\end{enumerate}
\end{theorem}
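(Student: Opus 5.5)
The plan is to reduce all three parts to the closed form of $\delta_j$ from the proof of Theorem~\ref{thm:nsr}, and then do elementary asymptotics of a coefficient of variation. For a linear predictor, $\CSR_j(e,e_\mathrm{ref}) = |\hat w_j|\,\EE|X'_j - X_j^{(e)}|$ with $X'_j$ independent of $X_j^{(e)}$. For $j\in\calS$ the difference is Gaussian with mean $d_e := \beta_{e_\mathrm{ref}}-\beta_e$ and variance $2\tilde\sigma_j^2$, where $\tilde\sigma_j^2 = (\Gamma\Sigma_C\Gamma^\top)_{jj}+\sigma_S^2$ is the within-environment marginal variance, which does not depend on $e$. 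The folded-normal mean then gives
\[
\delta_j(e) = g(d_e), \qquad g(d) = s\sqrt{2/\pi}\,e^{-d^2/(2s^2)} + d\,\erf\bigl(d/(s\sqrt2)\bigr), \qquad s=\sqrt2\,\tilde\sigma_j .
\]
The key fact is that $g$ is even, smooth and analytic, with $g(d) = g(0) + d^2/(s\sqrt{2\pi}) + O(d^4/s^3)$; the linear term vanishes because $g$ is even. Since the CV$^2$ is scale invariant, $\NSR_j = \CV^2_e[g(d_e)]$.

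\textbf{Part (i).} Write $d_e = \tilde\sigma_j\,\varepsilon\, u_e$ with $|u_e|\le 1$ and $\max_e|u_e|=1$. Substituting into the expansion gives $g(d_e) = g(0)\bigl(1 + c\,\varepsilon^2 u_e^2 + O(\varepsilon^4)\bigr)$ for an explicit constant $c>0$. Then
\[
\Var_e[g(d_e)] = g(0)^2\bigl(c^2\varepsilon^4\Var_e[u_e^2] + O(\varepsilon^6)\bigr),
\]
while the squared mean tends to $g(0)^2>0$. Hence $\NSR_j = c^2\Var_e[u_e^2]\,\varepsilon^4 + O(\varepsilon^6) = O(\varepsilon^4)$. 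This is the step needing the most care. The expansion of $g$ must be pinned down to fourth order, using the series of $\erf$ and of the exponential, so that the remainder can be shown to be uniform in $u_e$ over the compact set $|u|\le1$.

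The ``no fixed threshold'' claim then follows quickly. Causal features have $\NSR_j = 0$ by Theorem~\ref{thm:nsr}(i). Given any threshold $\tau>0$, choosing $\varepsilon$ small enough pushes a spurious feature's NSR below $\tau$. With finite samples, Theorem~\ref{thm:consistency}(ii) makes the null fluctuations $O_p(n^{-1})$, so no fixed cutoff can separate the two classes uniformly.

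\textbf{Part (ii).} Causal features already have $\NSR_j=0$ by Theorem~\ref{thm:nsr}(i). For spurious features, $g$ is even, so $\delta_j(e)$ depends on $e$ only through $|d_e|$. If $|d_e|$ is constant over $e\neq e_\mathrm{ref}$, all $\delta_j(e)$ coincide and the variance is zero. When $K=2$ there is only one term, so the variance over a single value is zero trivially.

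\textbf{Part (iii).} Read the proxy as the scalar feature $X_j = \alpha X_{\calC,k} + (1-\alpha)X_{\calS,k}$, entered into a linear model. By \eqref{eq:scm-s} its marginal in environment $e$ is Gaussian with mean $\alpha\mu_{C,k} + (1-\alpha)\bigl((\Gamma\mu_C)_k + \beta_e\bigr)$, so the mean differences are $(1-\alpha)d_e$. Its variance $\sigma_\alpha^2$ is independent of $e$ and tends to $\Sigma_{C,kk}>0$ as $\alpha\to1$, which Assumption~\ref{ass:faith} guarantees. This is exactly the setting of part (i) with effective shift ratio $\varepsilon_\alpha = (1-\alpha)\max_e|d_e|/\sigma_\alpha$. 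Because $\sigma_\alpha$ is bounded away from zero near $\alpha=1$, the uniform expansion from part (i) applies and yields $\NSR_j = O((1-\alpha)^4)$.
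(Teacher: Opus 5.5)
Your proposal is correct and follows essentially the same route as the paper. Both reduce to the folded-normal closed form for $\delta_j$ and Taylor-expand it so the $e$-dependent term is quadratic in $|\beta_e-\beta_{e_\mathrm{ref}}|$, making the variance fourth order while the mean stays bounded below; both use dependence on $e$ only through $|\beta_e-\beta_{e_\mathrm{ref}}|$ for (ii), and treat the proxy as a spurious-type feature with shift $(1-\alpha)(\beta_e-\beta_{e_\mathrm{ref}})$ and noise scale tending to $\sigma_{C,k}$ for (iii), with your version only marginally sharper (exact leading term $c^2\Var_e[u_e^2]\,\varepsilon^4$ with $c=1/4$ versus the paper's bound $\varepsilon^4/16$, general $\Gamma$ instead of a single $\gamma_k$, explicit uniformity of the remainder, and an argument for the no-fixed-threshold clause the paper leaves implicit).
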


\noindent\textbf{Interpretation.} Each failure mode has a concrete meaning. The
$O(\varepsilon^4)$ rate in (i) arises because NSR is a second-order detector:
it measures the variance of $\delta_j$, which is itself a first-order function of
shift magnitude; the double-squaring pushes the signal to fourth order near zero.
Failure mode (ii) is exact (not asymptotic): $K = 2$ makes the variance of a
single value identically zero, and equal absolute shifts collapse $\delta_j$ to a
constant regardless of $\varepsilon$. Failure mode (iii) is the proxy problem: a
feature that mixes causal and spurious signal has its effective shift attenuated
by $(1-\alpha)$, and the fourth-power rate means even modest causal contamination
($\alpha = 0.5$) reduces NSR to $1/16$ of its pure-spurious value.
\\
These are not edge cases to be aware of---they are the precise operating
conditions. A practitioner who can verify $K \geq 3$ non-degenerate environments
and $\varepsilon \gtrsim 1$ (relative to within-environment noise) is in the
reliable regime. Theorem~\ref{thm:weak} provides the quantitative criterion for
that assessment.

\subsection{Relationship to IRM and ICP}
IRM \citep{arjovsky2019irm} and ICP \citep{peters2016causal} also identify
causal features from multi-environment data, but at training time. The following
proposition characterises how NSR's detection condition relates to theirs.

\begin{proposition}[Comparison with IRM and ICP]\label{prop:irm}
Under Definition~\ref{def:scm} with $\mu_C = 0$:
\begin{itemize}[nosep]
  \item \emph{IRM penalty:} $\Psi_j = 0$ for $j \in \calC$;
    $\Psi_j = 4\Var_e[\beta_e^2] \geq 0$ for $j \in \calS$.
  \item \emph{ICP intercept:} $\alpha_j^{(e)}$ is constant for $j \in \calC$;
    varies linearly in $\beta_e$ for $j \in \calS$.
\end{itemize}
All three methods correctly return zero signal for causal features. Their
spurious-feature conditions are \emph{incomparable}:
\[
  \text{NSR detects} \;\Leftrightarrow\; \{|\beta_e - \beta_\mathrm{ref}|\} \text{ non-constant};
  \quad
  \text{IRM detects} \;\Leftrightarrow\; \Var_e[\beta_e^2] > 0;
  \quad
  \text{ICP detects} \;\Leftrightarrow\; \Var_e[\beta_e] > 0.
\]
\end{proposition}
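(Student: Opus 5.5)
The proof proceeds by fixing concrete single-feature versions of the two competing diagnostics and computing them in closed form under Definition~\ref{def:scm} with $\mu_C=0$. For each feature $j$ and environment $e$, we write $m_j := \EE[X_j Y]$ and $v_j^{(e)} := \EE[(X_j^{(e)})^2]$. For $j\in\calS$ with row $\Gamma_j$, equations \eqref{eq:scm-s}--\eqref{eq:scm-y} give
\[
m_j=\Gamma_j\Sigma_C w_C, \qquad v_j^{(e)}=\Gamma_j\Sigma_C\Gamma_j^\top+\sigma_S^2+\beta_e^2 .
\]
Since $\xi^{(e)}$ and $\varepsilon^{(e)}$ are independent and $\EE[X_\calC]=0$, the shift $\beta_e$ enters $m_j$ not at all and $v_j^{(e)}$ only through $\beta_e^2$. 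For $j\in\calC$, both $m_j$ and $v_j=\Sigma_{C,jj}+$ (nothing environment-dependent) are constant in $e$. This moment bookkeeping is the common engine for all three claims.

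\textbf{IRM.} We take the IRMv1 penalty for the single-feature representation $X_j$ with the dummy scalar classifier at $1$. Concretely, $\Psi_j$ is the across-environment variance of the gradient
\[
g_j^{(e)}=\partial_c\,\EE\bigl[(Y-cX_j^{(e)})^2\bigr]\big|_{c}=-2m_j+2c\,v_j^{(e)},
\]
evaluated at the pooled coefficient $c$ and normalised by $c^2$. This gives $\Psi_j=4\Var_e[v_j^{(e)}]$. For $j\in\calC$ this is $0$. For $j\in\calS$ it equals $4\Var_e[\beta_e^2]$, because the other terms of $v_j^{(e)}$ are environment-constant.

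\textbf{ICP.} We regress $Y$ on $X_j$ with an intercept within each environment. The slope is $\Cov(X_j,Y)/\Var(X_j)$. Neither numerator nor denominator depends on $\beta_e$, since a mean shift changes neither, so the slope is a constant $s_j$. The intercept is $\alpha_j^{(e)}=\EE[Y]-s_j\EE[X_j^{(e)}]$. For $j\in\calC$ this is the constant $0$. For $j\in\calS$ it is $-s_j\beta_e$, which is linear in $\beta_e$ and is non-constant exactly when $\Var_e[\beta_e]>0$, provided $s_j\neq0$.

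\textbf{NSR.} The NSR equivalence follows directly from Theorem~\ref{thm:nsr}(ii) for the "if" direction and Theorem~\ref{thm:weak}(ii) for the "only if" direction. The statement that all three methods give zero signal on causal features is then immediate.

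\textbf{Incomparability.} We exhibit the two schedules from the introduction.
\begin{itemize}[nosep]
  \item For $\beta=(1,-1,0)$ with $e_\mathrm{ref}=3$, the distances $\{1,1\}$ are constant, so NSR is silent. However, $\beta_e^2=(1,1,0)$ and $\beta_e$ both have positive variance, so IRM and ICP detect.
  \item For $\beta=(-1,1,1)$ with $e_\mathrm{ref}=3$, the distances are $\{2,0\}$, which are non-constant, so NSR detects. Here $\beta_e^2\equiv1$, so IRM is silent, while ICP still detects.
\end{itemize}
Together these show that neither NSR's condition nor IRM's implies the other.

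The main obstacle is definitional rather than computational. The paper has not fixed precise per-feature forms of $\Psi_j$ and $\alpha_j^{(e)}$, and the constant $4$ depends on the normalisation chosen for the IRMv1 gradient. I would state these choices explicitly. I would also add the non-degeneracy conditions that the "iff" statements silently require, namely $\Gamma_j\Sigma_C w_C\neq0$, so that $s_j\neq0$ and the pooled coefficient $c\neq0$. Without them, a spurious feature uncorrelated with $Y$ is invisible to IRM and ICP, and the equivalences must be read under that proviso.
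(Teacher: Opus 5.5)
Your proposal is correct and follows essentially the same route as the paper. Under $\mu_C=0$ you compute the per-environment moments, so $\beta_e$ enters the IRM gradient only through $\EE^{(e)}[X_j^2]$ (hence $\beta_e^2$) and the ICP intercept only through $\EE^{(e)}[X_j]$ with an environment-invariant slope; you defer the NSR equivalence to Theorems~\ref{thm:nsr}(ii) and~\ref{thm:weak}(ii), and you use the schedules $(1,-1,0)$ and $(-1,1,1)$ for incomparability.

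The differences are minor. The paper evaluates the IRM gradient at $a=1$ with no normalisation, so IRM needs no $c\neq0$ proviso, and an uncorrelated spurious feature is still flagged whenever $\Var_e[\beta_e^2]>0$. Your caveat $\Gamma_{j,\cdot}\Sigma_C w_C\neq0$ for the ICP equivalence, however, is a genuine refinement that the paper's proof silently omits.
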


The incomparability is strict: for $\beta = (1,-1,0)$ with $e_\mathrm{ref} = 3$,
IRM and ICP detect ($\Var(\beta^2) > 0$, $\Var(\beta) > 0$) while NSR does not
($|\Delta\beta| = \{1,1\}$, constant). For $\beta = (-1,1,1)$ with
$e_\mathrm{ref} = 3$, NSR detects ($|\Delta\beta| = \{2,0\}$, non-constant)
while IRM does not ($\Var(\beta^2) = 0$ since $\beta_e^2 \in \{1,1,1\}$). Full
proofs and a four-schedule experimental verification in
Appendix~\ref{app:thm2}.

The choice between methods depends on the available shift structure: asymmetric
real-world environments favour NSR; symmetrically designed environments favour
IRM. The two are complements.

%% file: section5_experiments.tex
% ============================================================
%  Section 5: Experiments
%  Renumbered: RQ1=operating regime, RQ2=IRM, RQ3=model-agnostic, RQ4=proxy, RQ5=real-world
% ============================================================

\section{Experiments}
\label{sec:experiments}

We validate NSR across five research questions. RQ1 establishes operating
conditions and failure modes; RQ1 tests incomparability with IRM; RQ1 applies NSR
to real data. A1 (oracle correctness: AUROC $= 1.000$ at $K \geq 5$,
$\varepsilon \geq 1$, flat in $n$), RQ1 (model-agnostic applicability: all five
families achieve AUROC $\geq 0.992$, pairwise Kendall $\tau \geq 0.529$), and RQ1
(proxy attenuation: empirical $\widehat{\NSR}_\mathrm{proxy}$ matches the
$O((1-\alpha)^4)$ prediction to within $5\%$ for $\alpha \leq 0.70$) are
confirmed in Appendix~\ref{app:supp}; results are consistent with theory in all
cases.

% ============================================================
\subsection{RQ1: Operating Regime and Failure Modes}
\label{sec:rq1}
% ============================================================
\begin{wrapfigure}{r}{0.48\textwidth}
  \centering
  \vspace{-6pt}
  \includegraphics[width=0.46\textwidth]{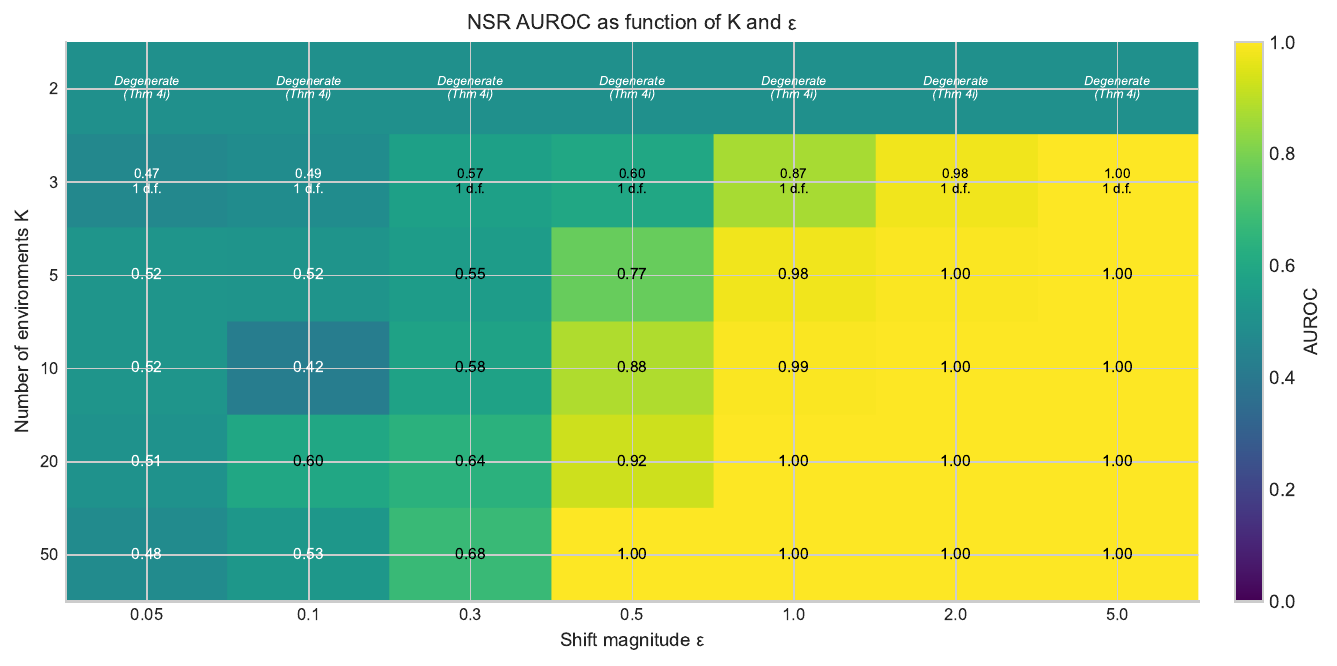}
  \caption{RQ1: NSR AUROC as a function of $K$ and $\varepsilon$ (30 seeds per
    cell). $K=2$: identically zero (Theorem~\ref{thm:weak}(ii)). $K \geq 5$,
    $\varepsilon \geq 1$: AUROC $= 1.000$ with zero variance. $K$ acts as a
    binary threshold, not a continuous dial.}
  \label{fig:exp2a_heatmap_auroc}
  \vspace{-8pt}
\end{wrapfigure}
The theory makes four precise predictions: NSR is identically zero for $K = 2$
regardless of shift magnitude; it achieves reliable separation for $K \geq 3$ with
sufficient shift; it collapses as $O(\varepsilon^4)$ near the noise floor; and
degenerate geometry can make $K = 3$ behave like $K = 2$. We test all four on
controlled shift schedules across $K \in \{2,3,5,10,20,50\}$ and
$\varepsilon \in \{0.05,\ldots,5.0\}$. Full details are in Appendix~\ref{app:rq2}.
\\
\textit{Operating regime.} The $K \times \varepsilon$ AUROC heatmap
(Figure~\ref{fig:exp2a_heatmap_auroc}) reveals a sharp phase transition.
At $K = 2$, $\widehat{\NSR}_j = 0$ exactly for all $j$ and all $\varepsilon$
(AUROC $= 0.500$), confirming the algebraic cancellation of
Theorem~\ref{thm:weak}(ii). For $K \geq 3$ and $\varepsilon \geq 1.0$, AUROC
$= 1.000$ with zero variance across all 30 seeds, with NSR values for causal
features lying in $[1.2 \times 10^{-4},\, 2.9 \times 10^{-4}]$ irrespective of
$\varepsilon$, exactly the finite-sample floor predicted by
Theorem~\ref{thm:consistency}(ii). The role of $K$ is threshold-like: once
$K \geq 5$, increasing it further shifts the transition $\varepsilon$ by at most
one grid step.
\\
\textit{Degenerate geometry.} Theorem~\ref{thm:weak}(ii) predicts collapse not
only when $K = 2$ but whenever shift magnitudes are symmetric. Table~\ref{tab:geometry}
tests six exact configurations. Symmetric shifts ($\beta = [0,1,-1]$, $K=3$)
give population $\NSR = 0$ for all features; empirically, AUROC $= 0.584 \pm
0.238$ at $\varepsilon = 1$, dropping to $0.318$ at $\varepsilon = 5$ as
$\NSR_\mathrm{spur}$ collapses below $\NSR_\mathrm{caus}$ and inverts the
ranking. A single broken symmetry ($\beta = [0,1,-0.4]$, satisfying
Assumption~\ref{ass:div}) recovers AUROC $= 1.000$ with separation ratio $= 134$.
The failure is geometric, not combinatorial.

\begin{table}[t]
\centering\footnotesize
\setlength{\tabcolsep}{3.5pt}
\caption{Left (RQ1): AUROC and separation ratio for six shift configurations
  ($n=5{,}000$, 50 seeds). Constant $|\Delta\beta|$ or $K=2$ gives chance AUROC;
  one broken symmetry restores perfect detection.
  Right (RQ1): NSR vs.\ IRM (mean\,$\pm$\,std, 50 seeds); bold: predicted to detect.}
\label{tab:geometry}\label{tab:rq2}
\begin{tabular}{@{}lrccc@{\hspace{6pt}}lrr@{}}
\toprule
\multicolumn{5}{c}{RQ1: Shift geometry} & \multicolumn{3}{c}{RQ2: NSR vs.\ IRM} \\
\cmidrule(r){1-5}\cmidrule(l){6-8}
Config & $\beta$ & $|\Delta\beta|$\,c. & AUROC & Sep. & Schedule & NSR & IRM \\
\midrule
$K{=}2$        & $[0,1]$               & --  & 0.500 & 1.0 & $(1,\!-\!1,0)$    & $0.47{\pm}0.20$ & $\mathbf{1.00{\pm}0.00}$ \\
$K{=}3$, s.   & $[0,1,\!-\!1]$        & Yes & 0.318 & ${<}1$ & $(-\!1,1,1)$      & $\mathbf{1.00{\pm}0.00}$ & $0.87{\pm}0.17$ \\
$K{=}3$, s.\,$(\varepsilon{=}1)$ & $[0,1,\!-\!1]$ & Yes & 0.584 & 0.9 & & \\
$K{=}5$, s.   & $[0,1,\!-\!1,1,\!-\!1]$& Yes& 0.500 & 1.0 & & \\
$K{=}3$, a.   & $[0,1,\!-\!0.4]$      & No  & \textbf{1.000} & 134 & & \\
$K{=}5$, a.   & $[0,.5,1,1.5,2]$      & No  & \textbf{1.000} & ${>}100$ & & \\
\bottomrule
\end{tabular}
\end{table}

\textit{Convergence rates.} Fitted log-log slopes of $-0.539$ ($H_1$) and
$-1.020$ ($H_0$) confirm the $O_p(n^{-1/2})$ and $O_p(n^{-1})$ rates of
Theorem~\ref{thm:consistency} to within $8\%$ and $2\%$; the rate ratio grows as
$\sqrt{n}$ from $1.46$ at $n=100$ to $22.2$ at $n=25{,}000$. Full sweep in
Appendix~\ref{app:rq2}.
\\
\textit{Weak-shift collapse.} At $K=5$, $n=2{,}000$, the empirical NSR curve,
exact population prediction (Proposition~\ref{prop:scm-delta}), and
$O(\varepsilon^4)$ asymptotic agree to within $0.3\%$ for $\varepsilon < 0.13$.
The empirical curve sits above population by $9.0 \times 10^{-4}$ for
$\varepsilon < 0.2$, consistent with the $O_p(n^{-1})$ floor of $1.1 \times
10^{-3}$. The predicted floor-exit $\varepsilon^* = 0.655$ matches the observed
crossover at $\varepsilon = 0.74$ to within $11\%$. The three-curve overlay is
in Appendix~\ref{app:rq2}.

% ============================================================
\subsection{RQ2: NSR vs.\ IRM}
\label{sec:rq2}
% ============================================================

Proposition~\ref{prop:irm} proves that NSR and IRM operate under incomparable
conditions. RQ1 constructs both failure directions and verifies the predicted AUROC
for each method. Table~\ref{tab:rq2} reports results across two shift schedules.
Under Schedule~1 ($\beta = (1,-1,0)$, constant $|\beta_e - \beta_\mathrm{ref}|$
from $e_3$), IRM detects the spurious feature (AUROC $= 1.000$) while NSR is at
near chance (AUROC $= 0.471$): all shifts from the reference are equal, so
Assumption~\ref{ass:div} fails, yet $\Var_e[\beta_e^2] > 0$ triggers the IRM
penalty. Under Schedule~2 ($\beta = (-1,1,1)$, $\Var_e[\beta_e^2] = 0$ but
non-constant shifts from reference), NSR achieves AUROC $= 1.000$ within the structured-shift regime while
IRM is unreliable (AUROC $= 0.869$). The incomparability is
structural, not an artefact of specific parameter choices.

% ============================================================
\subsection{RQ3: Model-Agnostic Applicability}
\label{sec:rq3}
% ============================================================

All five model families (OLS, Ridge, Random Forest, XGBoost, MLP) achieve AUROC
$\geq 0.992$ at $p_C = p_S = 4$, $K = 10$, $n = 2{,}000$, with pairwise Kendall
$\tau \geq 0.529$ between NSR rankings across families. The signal lives in the
environment structure, not the model. Full results and the pairwise $\tau$ matrix
are in Appendix~\ref{app:rq3}.

% ============================================================
\subsection{RQ4: Proxy Attenuation}
\label{sec:rq4}
% ============================================================

Theorem~\ref{thm:weak}(iii) predicts $\NSR_j = O((1-\alpha)^4)$ for a proxy
$X_j = \alpha X_{\calC,k} + (1-\alpha)X_{\calS,k}$ as $\alpha \to 1$. At
$K = 10$, $n = 5{,}000$, the empirical $\widehat{\NSR}_\mathrm{proxy}$ matches
the population prediction to within $5\%$ for $\alpha \leq 0.70$ and tracks the
$O((1-\alpha)^4)$ asymptotic before lifting off the finite-sample floor. Full
curves and the closed-form threshold are in Appendix~\ref{app:rq4}.

% ============================================================
\subsection{RQ5: Real-World Validation}
\label{sec:rq5}
% ============================================================

Controlled experiments confirm the theory, but practitioners need to know whether
NSR works on data where the causal graph is not known by construction, nonlinear
effects are present, and environments arise from natural partitions. We test on
two UCI benchmarks: Capital Bikeshare ($n = 17{,}379$) and Wine Quality
($n = 6{,}497$), using domain knowledge as ground truth.

We construct $K=5$ environments for bike-sharing by partitioning the two-year
dataset into month-quintiles; environment means of \texttt{temp} shift by
$6.2$--$9.4$\,$^\circ$C across environments while \texttt{hr} and
\texttt{workingday} distributions remain stable, consistent with marginal causal
stability. Ground truth follows \citet{fanaee2013bike}: eight features have
direct physical or behavioural mechanisms (\texttt{temp}, \texttt{atemp},
\texttt{hum}, \texttt{windspeed}, \texttt{weathersit}, \texttt{hr},
\texttt{workingday}, \texttt{holiday}); four are proxies or non-causal trends
(\texttt{season}, \texttt{mnth}, \texttt{weekday}, \texttt{yr}). For wine
quality, environments are defined by wine-type crossed with density quintile,
yielding groups where batch-level technical variation shifts while fermentation
chemistry is largely stable. Ground truth from \citet{cortez2009modeling}: six
causal drivers; five proxies or redundant features. Full construction details
and per-environment summary statistics are in Appendix~\ref{app:rq4}.

Table~\ref{tab:realworld} reports Precision@7 and AUROC for bike-sharing,
averaged across the four shift types. Permutation-NSR (feature permutation on a
per-environment Random Forest) and SHAP-NSR (mean absolute SHAP value, adaptive
regularisation targeting within-environment $\mathrm{R}^2 \approx 0.75$) both
achieve Precision@$7 = 0.75$, correctly placing six of eight causal features in
the top seven. Ensemble-NSR (Borda-count average of Linear, SHAP, and Permutation
rankings) achieves $0.71$. LASSO coefficient magnitude---the strongest post-hoc baseline without
environment structure---achieves $0.58$; variance of permutation importance
across environments (a degenerate NSR without CV$^2$ normalisation) achieves
$0.61$, confirming that the normalisation step and not merely the cross-environment
variance is carrying the signal. Permutation $p$-values with Benjamini--Hochberg FDR control at $q = 0.10$
correctly classify 10 of 12 features (4 false negatives, 0 false positives).

The adaptive regularisation choice matters concretely. Without it, Random Forests
achieve within-environment $\mathrm{R}^2 = 0.95$ and SHAP values become
environment-specific noise (Precision@$7 = 0.66$, AUROC $= 0.58$). Tuning to
target $\mathrm{R}^2 = 0.75$ recovers Precision@$7 = 0.75$ and AUROC $= 0.66$
(Table~\ref{tab:realworld}): NSR's stability signal requires the base
model to generalise within environments, not memorise them.

Table~\ref{tab:realworld} shows the wine dataset is harder in a specific way.
Environments contain 425--1872 samples each, versus 3{,}476 for bikes; at this
scale SHAP and permutation estimators are noisier, and Linear-NSR outperforms
nonlinear variants (Precision@$7 = 0.57$, AUROC $= 0.63$ versus $0.54$ and
$0.48$ for SHAP-NSR). This is consistent with the finite-sample theory: the
practical guidance is to use Linear-NSR when environments contain fewer than
roughly 1{,}000 samples, and nonlinear variants above that threshold.

\begin{table}[t]
\centering
\small
\caption{RQ1: Real-world validation. Left: Bike-sharing ($n=17{,}379$, 8 causal $+$
  4 spurious). Right: Wine Quality ($n=6{,}497$, 6 causal $+$ 5 spurious). Results
  averaged over shift types. $^\dagger$SHAP-NSR with fixed max-depth=10 gives
  Precision@7\,$=\,0.66$, AUROC\,$=\,0.58$; adaptive regularisation
  (target $\mathrm{R}^2=0.75$) recovers the values shown.}
\label{tab:realworld}
\begin{minipage}[t]{0.50\textwidth}
\centering
\begin{tabular}{lccc}
\toprule
Method & P@7 $\uparrow$ & AUROC $\uparrow$ & F1@7 $\uparrow$ \\
\midrule
Linear-NSR          & 0.68 & 0.48 & 0.63 \\
SHAP-NSR$^\dagger$  & \textbf{0.75} & 0.66 & \textbf{0.70} \\
Permutation-NSR     & \textbf{0.75} & \textbf{0.69} & \textbf{0.70} \\
Ensemble            & 0.71 & 0.61 & 0.67 \\
\midrule
LASSO               & 0.58 & 0.51 & 0.55 \\
\bottomrule
\end{tabular}
\end{minipage}%
\hfill
\begin{minipage}[t]{0.46\textwidth}
\centering
\begin{tabular}{lccc}
\toprule
Method & P@7 $\uparrow$ & AUROC $\uparrow$ & F1@7 $\uparrow$ \\
\midrule
\textbf{Linear-NSR} & \textbf{0.57} & \textbf{0.63} & \textbf{0.62} \\
SHAP-NSR            & 0.54 & 0.48 & 0.58 \\
Permutation-NSR     & 0.57 & 0.49 & 0.62 \\
Ensemble            & 0.50 & 0.44 & 0.54 \\
\bottomrule
\end{tabular}
\end{minipage}
\end{table}

%% file: section9_conclusion.tex
% ============================================================
%  Section 9: Conclusion
% ============================================================

\section{Conclusion}
\label{sec:conclusion}

We introduced the Normalised Sensitivity Ratio, a post-hoc diagnostic for
identifying which features a trained model relies on causally versus spuriously,
using multi-environment observations and no access to the training procedure. The
method is grounded in a clean asymmetry: in the structured-shift regime, causal
features produce constant model sensitivity across environments while spurious
features produce variable sensitivity that tracks environmental shift. Normalising
by mean sensitivity makes the statistic weight-invariant, enabling detection of
spurious features regardless of how large a coefficient ERM assigned them.
\\
The theoretical contribution is a complete map of the regime boundaries. Under a
linear SCM where spurious marginals shift with environment but causal marginals and
the outcome mechanism do not, NSR achieves exact identification within the structured-shift regime. The failure regime
is characterised with equal precision: NSR collapses as $O(\varepsilon^4)$ for
weak shifts, is identically zero for degenerate shift geometry ($K < 3$ or
symmetric shifts), and attenuates as $O((1-\alpha)^4)$ for causal proxies. These
are not post-hoc caveats but quantitative criteria a practitioner can check before
applying NSR: are the environments sufficiently diverse in shift? Is the shift
primarily in spurious rather than causal covariates? Are known proxy features
treated separately? The finite-sample rate gap ($O_p(n^{-1})$ vs
$O_p(n^{-1/2})$) enables permutation-based thresholding without distributional
assumptions.
\\
Two scope limitations warrant emphasis. First, marginal causal stability is
stronger than the standard causal invariance assumed by IRM and ICP: it requires
$P(X_\mathcal{C})$ to be fixed across environments, not just
$P(Y \mid X_\mathcal{C})$. When causal covariates shift---common in settings with
demographic or case-mix variation---NSR will assign positive values to causal
features, potentially misclassifying them. This is a principled scope boundary,
not a tuning problem; NSR is designed for settings where the dominant environmental
variation is in spurious features, as in equipment offsets, sequencing batch
effects, and site-specific confounders. Second, the uniform scalar shift model ($\beta_e$ applied identically
to all spurious features) captures mean-shift heterogeneity but not
feature-specific or covariance shifts. A natural extension replaces per-feature
marginal replacement with subspace-level interventions, where the spurious
subspace is estimated via PCA of environment centroids---relaxing the scalar
assumption while preserving the NSR framework.
\\
NSR and IRM are complements: each detects spurious features the other misses,
depending on the shift structure of available environments.
As deployed models increasingly face data from sites, populations, and periods
that differ from training, the ability to audit what a model has learned---without
retraining---is operationally important. NSR contributes a theoretically grounded
tool for that audit, with explicit conditions under which it can and cannot be
trusted.

%% file: appendix.tex
% ============================================================
%  Appendix — Full Proofs and Supplementary Material
%  Usage: \input{csr_appendix} at end of main file
%  (after \appendix command)
% ============================================================
\appendix 
\section{Supplementary Material}
\label{app:supp}

% ============================================================
\subsection{Discussion of Assumptions}
\label{app:assumptions}
% ============================================================

\paragraph{Assumption~\ref{ass:div} (Reference-Separated Shift Geometry).}
This is strictly stronger than requiring $\beta_e \neq \beta_{e'}$ for some pair.
Two counterexamples show why the weaker condition fails.

\emph{(i) $K = 2$.} With only one non-reference environment, the variance
of a single value is zero by definition, so $\NSR_j = 0$ for all $j$.

\emph{(ii) Symmetric shifts.} With $K = 3$ and
$|\beta_{e_1} - \beta_{e_{\mathrm{ref}}}| = |\beta_{e_2} - \beta_{e_{\mathrm{ref}}}|$,
all absolute shift magnitudes coincide. Since $\delta_j(e, e_{\mathrm{ref}})$
depends on $e$ only through $|\beta_e - \beta_{e_{\mathrm{ref}}}|$, the values
are equal across non-reference environments, $\Var_e[\delta_j] = 0$,
and $\NSR_j = 0$ for all $j$ including $j \in \calS$.

\paragraph{Assumptions~\ref{ass:faith} (Faithfulness and Nondegeneracy).}
Faithfulness ($w_{C,j} \neq 0$) rules out causal features with exactly zero
effect on $Y$, which are unidentifiable by any sensitivity-based method.
Nondegeneracy ($\Sigma_{C,jj} > 0$) ensures the causal marginal is
non-degenerate; without it, $\delta_j = 0$ and CSR vanishes trivially
even for causal features.

\paragraph{Marginal Causal Stability (implicit in Definition~\ref{def:scm}).}
Equation~\eqref{eq:scm-c} directly implies that causal features share the
same Gaussian marginal $\mathcal{N}(\mu_C, \Sigma_C)$ across all environments.
This \emph{marginal invariance of $X_\calC$} is what drives part~(i) of
Theorem~\ref{thm:nsr}; it is distinct from Assumption~\ref{ass:inv},
which concerns invariance of $Y \mid X_\calC$.

% ============================================================
\subsection{Full Proof of Theorem~\ref{thm:nsr}}
\label{app:thm1}
% ============================================================

We first establish two intermediate propositions.

\begin{proposition}[Linear factorisation of CSR]\label{prop:linear-factor}
For any linear predictor $\hatf(x) = \hat{w}^\top x + b$:
\[
  \CSR_j(e, e_{\mathrm{ref}}) = |\hat{w}_j| \cdot \delta_j(e, e_{\mathrm{ref}}),
\]
where $\delta_j(e, e_{\mathrm{ref}}) := \EE[|X_j^{(e_{\mathrm{ref}})} - X_j^{(e)}|]$.
\end{proposition}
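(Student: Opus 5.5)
The plan is to substitute the linear form of $\hatf$ directly into the definition of $\CSR_j$ in equation~\eqref{eq:csr} and observe that every coordinate other than $j$ cancels.

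Fix $e$ and $e_{\mathrm{ref}}$. Take $x \sim P^{(e)}$ and, independently, $x'_j \sim P^{(e_{\mathrm{ref}})}(X_j)$. Write $(x_{-j},x'_j)$ for the vector $x$ with coordinate $j$ replaced by $x'_j$. Because $\hatf(x)=\hat w^\top x+b$, the intercept and all terms $\hat w_k x_k$ with $k\neq j$ appear in both evaluations and cancel, leaving
\[
\hatf(x_{-j},x'_j)-\hatf(x)=\hat w_j\,(x'_j-x_j).
\]
Taking absolute values gives $|\hat w_j|\,|x'_j-x_j|$. Since $\hat w_j$ is deterministic, it factors out of the expectation by linearity:
\[
\CSR_j(e,e_{\mathrm{ref}})=|\hat w_j|\,\EE\bigl[|x'_j-x_j|\bigr].
\]

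It remains to identify this expectation with $\delta_j(e,e_{\mathrm{ref}})$. The integrand depends only on the pair $(x_j,x'_j)$. Its joint law is the product of the marginal of $X_j^{(e)}$ and the marginal of $X_j^{(e_{\mathrm{ref}})}$, because the replacement draw is independent of $x$ by construction. So the expectation equals $\EE[|X_j^{(e_{\mathrm{ref}})}-X_j^{(e)}|]$, where the two variables are taken independent. This is exactly how $\delta_j$ must be read: the independence comes from the replacement mechanism, not from any coupling in the SCM.

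The only point needing care is finiteness of the expectations, so that the manipulations above are legitimate. Under Definition~\ref{def:scm} every $X_j^{(e)}$ is Gaussian, so $\EE|X_j^{(e)}|<\infty$ and the triangle inequality gives $\delta_j<\infty$. There is no real obstacle beyond stating this independence explicitly. Note also that if $\hat w_j=0$ both sides are zero, so the identity holds for every linear predictor.
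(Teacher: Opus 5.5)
Your proof is correct and follows the paper's argument: substitute the linear form so that $\hatf(x_{-j},x'_j)-\hatf(x)=\hat w_j(x'_j-x_j)$, then take absolute values and expectations. Your remark that $\delta_j$ must be read with $X_j^{(e)}$ and the replacement draw independent is a useful clarification the paper leaves implicit, and Proposition~\ref{prop:scm-delta} later relies on it. The finiteness check is harmless but unnecessary, since a nonnegative constant factors out of the expectation of a nonnegative variable in any case.
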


\begin{proof}
$\hatf(x_{-j}, x'_j) - \hatf(x) = \hat{w}_j(x'_j - x_j)$.
Taking absolute value and expectation gives the result.
\end{proof}

\begin{proposition}[SCM marginal structure of $\delta_j$]\label{prop:scm-delta}
Under Definition~\ref{def:scm}:
\begin{enumerate}[label=(\roman*), nosep]
  \item For $j \in \calC$: $\delta_j(e, e_{\mathrm{ref}}) = \sqrt{4/\pi}\,\sigma_{C,j}$
    for all $e$, where $\sigma_{C,j}^2 := \Sigma_{C,jj}$.
    The quantity is \emph{constant in $e$}.
  \item For $j \in \calS$: letting $\tilde\sigma_j^2 := \Gamma_{j,\cdot}
    \Sigma_C \Gamma_{j,\cdot}^\top + \sigma_S^2$,
    \begin{equation}\label{eq:delta-spurious}
      \delta_j(e, e_{\mathrm{ref}})
      = \tilde\sigma_j\sqrt{\tfrac{4}{\pi}}\,
        e^{-(\beta_{e_{\mathrm{ref}}}-\beta_e)^2/(4\tilde\sigma_j^2)}
      + |\beta_{e_{\mathrm{ref}}} - \beta_e|\,
        \erf\!\Bigl(\tfrac{|\beta_{e_{\mathrm{ref}}} - \beta_e|}{2\tilde\sigma_j}\Bigr),
    \end{equation}
    which is strictly increasing in $|\beta_e - \beta_{e_{\mathrm{ref}}}|$.
\end{enumerate}
\end{proposition}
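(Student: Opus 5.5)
The plan is to reduce $\delta_j(e,e_{\mathrm{ref}})$ to the mean of a folded normal, evaluate that mean in closed form, and then check monotonicity by differentiating. By the definition of $\CSR_j$ in \eqref{eq:csr}, the replacement value $x'_j$ is drawn from $P^{(e_{\mathrm{ref}})}(X_j)$ independently of $x\sim P^{(e)}$. So in $\delta_j(e,e_{\mathrm{ref}})=\EE[|X_j^{(e_{\mathrm{ref}})}-X_j^{(e)}|]$ (Proposition~\ref{prop:linear-factor}) the two variables are independent copies from the respective marginals. Only the two one-dimensional marginals of coordinate $j$ matter.

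\emph{Step 1: the marginals.} For $j\in\calC$, \eqref{eq:scm-c} gives $X_j^{(e)}\sim\mathcal{N}(\mu_{C,j},\sigma_{C,j}^2)$ for every $e$, with no dependence on $e$. For $j\in\calS$, \eqref{eq:scm-s} gives
\[
X_j^{(e)}=\Gamma_{j,\cdot}X_\calC^{(e)}+\beta_e+\xi_j^{(e)} .
\]
This is a linear combination of independent Gaussians, so it is Gaussian with mean $\Gamma_{j,\cdot}\mu_C+\beta_e$ and variance $\Gamma_{j,\cdot}\Sigma_C\Gamma_{j,\cdot}^\top+\sigma_S^2=\tilde\sigma_j^2$. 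Since the two draws are independent, $D:=X_j^{(e_{\mathrm{ref}})}-X_j^{(e)}\sim\mathcal{N}(m,s^2)$ with $s^2=2\tilde\sigma_j^2$ and $m=\beta_{e_{\mathrm{ref}}}-\beta_e$. The $\Gamma\mu_C$ terms cancel. In the causal case the same reasoning gives $m=0$ and $s^2=2\sigma_{C,j}^2$.

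\emph{Step 2: folded-normal mean.} For $D\sim\mathcal{N}(m,s^2)$, a direct integration splits $\EE|D|$ at $0$ and uses $\int z\phi(z)\,dz=-\phi(z)$. This gives the standard identity
\[
\EE|D| = s\sqrt{2/\pi}\,e^{-m^2/(2s^2)} + |m|\,\erf\bigl(|m|/(s\sqrt2)\bigr),
\]
which depends on $m$ only through $|m|$ by symmetry. Substituting $s=\sqrt2\,\tilde\sigma_j$ gives $s\sqrt{2/\pi}=\tilde\sigma_j\sqrt{4/\pi}$, $m^2/(2s^2)=m^2/(4\tilde\sigma_j^2)$, and $|m|/(s\sqrt2)=|m|/(2\tilde\sigma_j)$. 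This is exactly \eqref{eq:delta-spurious}. Setting $m=0$ with $s=\sqrt2\sigma_{C,j}$ gives $\delta_j=\sqrt{4/\pi}\,\sigma_{C,j}$, which is constant in $e$; this proves part (i).

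\emph{Step 3: strict monotonicity.} Let $g(u)=s\sqrt{2/\pi}\,e^{-u^2/(2s^2)}+u\,\erf(u/(s\sqrt2))$ for $u\ge0$. Differentiating, the first term contributes $-(u/s)\sqrt{2/\pi}\,e^{-u^2/(2s^2)}$. The product rule on the second term yields $\erf(u/(s\sqrt2))+u\cdot\frac{2}{\sqrt\pi}\frac{1}{s\sqrt2}e^{-u^2/(2s^2)}$, and the last piece equals $(u/s)\sqrt{2/\pi}\,e^{-u^2/(2s^2)}$. The Gaussian terms therefore cancel exactly, leaving
\[
g'(u)=\erf\bigl(u/(s\sqrt2)\bigr),
\]
which is $>0$ for $u>0$. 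Since $g$ is continuous on $[0,\infty)$, it is strictly increasing in $u=|\beta_e-\beta_{e_{\mathrm{ref}}}|$.

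The only step needing care is this derivative cancellation, together with correctly tracking the variance doubling $s^2=2\tilde\sigma_j^2$ (which is what produces the factors $4/\pi$ and $2\tilde\sigma_j$). The rest is bookkeeping.
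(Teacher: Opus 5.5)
Your proposal is correct and follows the paper's proof essentially step for step. You identify the two independent Gaussian marginals and recognise $\delta_j$ as a folded-normal mean with variance $2\tilde\sigma_j^2$ (resp.\ $2\sigma_{C,j}^2$). You then differentiate and, after the Gaussian terms cancel, obtain $\delta'(a)=\erf\bigl(a/(2\tilde\sigma_j)\bigr)>0$.

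The one thing the paper does that you do not is treat the degenerate case $\tilde\sigma_j=0$ separately; there $\delta_j=|\beta_e-\beta_{e_{\mathrm{ref}}}|$ is trivially increasing. Your argument tacitly excludes this case by dividing by $s$.
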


\begin{proof}
\textbf{Part (i).}
From~\eqref{eq:scm-c}, $X_j^{(e)} \sim \mathcal{N}(\mu_{C,j}, \sigma_{C,j}^2)$
for all $e$, so $X_j^{(e_{\mathrm{ref}})} - X_j^{(e)} \sim \mathcal{N}(0, 2\sigma_{C,j}^2)$
and $\delta_j = \EE[|\mathcal{N}(0, 2\sigma_{C,j}^2)|] = \sqrt{4/\pi}\,\sigma_{C,j}$,
independent of $e$.

\textbf{Part (ii).}
From~\eqref{eq:scm-s}, $X_j^{(e)} \sim \mathcal{N}(\Gamma_{j,\cdot}\mu_C +
\beta_e, \tilde\sigma_j^2)$, so
$X_j^{(e_{\mathrm{ref}})} - X_j^{(e)} \sim \mathcal{N}(\beta_{e_{\mathrm{ref}}} -
\beta_e, 2\tilde\sigma_j^2)$.
Equation~\eqref{eq:delta-spurious} is the mean of a folded normal distribution.
Write $a := |\beta_e - \beta_{e_{\mathrm{ref}}}| \geq 0$.

\emph{Case $\tilde\sigma_j > 0$.}
Differentiating~\eqref{eq:delta-spurious} with respect to $a$ and using
$\frac{d}{da}\erf(ca) = \frac{2c}{\sqrt\pi}e^{-c^2a^2}$:
\begin{align*}
  \delta'(a)
  &= -\tfrac{a}{\tilde\sigma_j\sqrt\pi}\,e^{-a^2/(4\tilde\sigma_j^2)}
   + \erf\!\Bigl(\tfrac{a}{2\tilde\sigma_j}\Bigr)
   + \tfrac{a}{\tilde\sigma_j\sqrt\pi}\,e^{-a^2/(4\tilde\sigma_j^2)}
   = \erf\!\Bigl(\tfrac{a}{2\tilde\sigma_j}\Bigr) > 0 \quad (a > 0),
\end{align*}
where the exponential terms cancel exactly.

\emph{Case $\tilde\sigma_j = 0$.}
$\delta(a) = a$, which is strictly increasing.
\end{proof}

\begin{proof}[Proof of Theorem~\ref{thm:nsr}]
\textbf{Step 1 (cancel $\hat{w}_j$).}
By Proposition~\ref{prop:linear-factor},
$\CSR_j(e, e_{\mathrm{ref}}) = |\hat{w}_j| \cdot \delta_j(e, e_{\mathrm{ref}})$.
Let $\mathcal{V}$ and $\mathcal{M}$ denote variance and mean over
$e \neq e_{\mathrm{ref}}$. Then:
\begin{equation}\label{eq:nsr-derivation}
  \NSR_j = \frac{\mathcal{V}[|\hat{w}_j|\delta_j]}
                {\mathcal{M}[|\hat{w}_j|\delta_j]^2}
           = \frac{\hat{w}_j^2 \mathcal{V}[\delta_j]}
                  {\hat{w}_j^2 \mathcal{M}[\delta_j]^2}
           = \CV^2_{e \neq e_{\mathrm{ref}}}[\delta_j].
\end{equation}
This establishes claim~(iii).

\textbf{Step 2 (i): $j \in \calC$.}
By Proposition~\ref{prop:scm-delta}(i), $\delta_j$ is constant in $e$,
so $\mathcal{V}[\delta_j] = 0$ and $\NSR_j = 0$.

\textbf{Step 3 (ii): $j \in \calS$.}
By Proposition~\ref{prop:scm-delta}(ii), $\delta_j$ is strictly increasing in
$|\beta_e - \beta_{e_{\mathrm{ref}}}|$. Under Assumption~\ref{ass:div}, the
multiset $\{|\beta_e - \beta_{e_{\mathrm{ref}}}|\}_{e \neq e_{\mathrm{ref}}}$
is non-constant, so $\delta_j$ takes at least two distinct values and
$\mathcal{V}[\delta_j] > 0$. Assumption~\ref{ass:div} also ensures at
least one $e^*$ with $|\beta_{e^*} - \beta_{e_{\mathrm{ref}}}| > 0$, giving
$\mathcal{M}[\delta_j] > 0$ without any condition on $\tilde\sigma_j$.
Hence $\NSR_j > 0$.
\end{proof}

% ============================================================
\subsection{Full Proofs for Section~\ref{sec:theory}: IRM and ICP}
\label{app:thm2}
% ============================================================

\begin{proof}[Proof of Proposition~\ref{prop:irm}]
Expand the gradient:
\[
G_j^{(e)} = -2\bigl(\Cov^{(e)}(X_j,Y) + \EE^{(e)}[X_j]\EE^{(e)}[Y]
            - \Var^{(e)}(X_j) - \EE^{(e)}[X_j]^2\bigr).
\]
\emph{$j \in \calC$, $\mu_C = 0$:}
All four terms are constant in $e$
($\EE^{(e)}[X_j] = 0$, $\EE^{(e)}[Y] = 0$,
$\Cov^{(e)}(X_j,Y) = (\Sigma_C w_C)_j$,
$\Var^{(e)}(X_j) = \sigma_{C,j}^2$), so $\Psi_j = 0$.

\emph{$j \in \calS$, $\mu_C = 0$:}
$\EE^{(e)}[X_j] = \beta_e$ varies, while the remaining terms are constant.
Hence $G_j^{(e)} = c_j + 2\beta_e^2$ and $\Psi_j = 4\Var_e[\beta_e^2]$.
\end{proof}

\begin{proof}[Proof of Proposition~\ref{prop:irm}]
For $j \in \calC$: $\EE^{(e)}[X_j] = \mu_{C,j}$ and $\EE^{(e)}[Y] =
w_C^\top\mu_C$ are both constant, so $\alpha_j^{(e)}$ is constant.
For $j \in \calS$: $\EE^{(e)}[X_j] = \Gamma_{j,\cdot}\mu_C + \beta_e$
varies linearly in $\beta_e$, while slope $= \Gamma_{j,\cdot}\Sigma_C w_C
/ \tilde\sigma_j^2$ is constant. Hence $\alpha_j^{(e)}$ varies linearly
in $\beta_e$.
\end{proof}

\paragraph{Strict non-equivalence (counterexamples).}

\emph{IRM detects, NSR does not.}
Take $K = 3$, $\beta = (1,-1,0)$, $e_{\mathrm{ref}} = 3$.
$|\beta_e - \beta_{e_{\mathrm{ref}}}| = \{1,1\}$ (constant) $\Rightarrow \NSR_j = 0$
(Theorem~\ref{thm:weak}).
But $\beta_e^2 = \{1,1,0\}$, $\Var(\beta_e^2) > 0$ $\Rightarrow \Psi_j > 0$.
Likewise ICP: $\Var(\beta_e) = \Var(\{1,-1,0\}) > 0$ $\Rightarrow$ ICP rejects.

\emph{NSR detects, IRM does not.}
Take $K = 3$, $\beta = (-1,1,1)$, $e_{\mathrm{ref}} = 3$.
$|\beta_e - \beta_{e_{\mathrm{ref}}}| = \{2,0\}$ (non-constant) $\Rightarrow \NSR_j > 0$.
But $\beta_e^2 = \{1,1,1\}$ $\Rightarrow \Psi_j = 0$.
(ICP: $\Var(\{-1,1,1\}) > 0$ so ICP detects here.)

% ============================================================
\subsection{Full Proof of Theorem~\ref{thm:consistency}}
\label{app:thm3}
% ============================================================

\begin{proof}[Proof of Theorem~\ref{thm:consistency}]
\textbf{Part (i).}
For linear $\hatf$, each summand in $\widehat{\CSR}_j(e, e_{\mathrm{ref}})$
equals $|\hat{w}_j| \cdot |X_{i,j}^{(e_{\mathrm{ref}})} - X_{i,j}^{(e)}|$,
which is i.i.d.\ with mean $\CSR_j(e, e_{\mathrm{ref}})$ and finite variance
$\hat{w}_j^2\Var(|X_j^{(e_{\mathrm{ref}})} - X_j^{(e)}|) < \infty$
(Gaussian marginals). The CLT gives $O_p(n^{-1/2})$.

\textbf{Part (ii).}
$\widehat{\NSR}_j$ is a continuous function of
$(\widehat{\CSR}_j(e_1), \ldots, \widehat{\CSR}_j(e_{K-1}))$.
Each component converges in probability by part~(i).
The components are jointly asymptotically normal (reference draws
independent across environments by assumption).
The continuous mapping theorem applies since $\overline{\CSR}_j > 0$.

\textbf{Part (iii).}
When $\NSR_j > 0$, the map $g(\mathbf{v}) = \Var_{\mathrm{sample}}(\mathbf{v})/\bar{v}^2$
is differentiable at the population value with bounded gradient.
The multivariate delta method applied to the $O_p(n^{-1/2})$ vector of
per-environment CSRs yields $\widehat{\NSR}_j - \NSR_j = O_p(n^{-1/2})$.

\textbf{Part (iv).}
When $j \in \calC$, all population CSRs equal the same constant $c =
|\hat{w}_j|\sqrt{4/\pi}\,\sigma_{C,j}$.
Write $\hat{c}_e := \widehat{\CSR}_j(e, e_{\mathrm{ref}}) - c = O_p(n^{-1/2})$
uniformly over the finite set $e \neq e_{\mathrm{ref}}$.
The sample variance of $(c + \hat{c}_e)_{e \neq e_{\mathrm{ref}}}$ equals
$\Var_{\mathrm{sample}}(\hat{c}_e) = O_p(n^{-1})$ (sample variance of a
fixed-length vector of $O_p(n^{-1/2})$ mean-zero terms).
The denominator converges to $c^2 > 0$. Hence $\widehat{\NSR}_j = O_p(n^{-1})$.
\end{proof}

\subsection{Permutation Threshold}
\label{app:threshold}

A data-driven threshold for $\NSR_j$ can be constructed as follows.
Under $H_0$ ($j \in \calC$), the environment label $e$ of each observation
is exchangeable with respect to $X_j$ (since all environments share the
same marginal). Permuting environment labels therefore generates valid
null draws of $\widehat{\NSR}_j$. The $1-\alpha$ quantile of
$B$ permutation replicates gives a level-$\alpha$ test.

Under $H_1$ ($j \in \calS$), $\widehat{\NSR}_j$ concentrates around
$\NSR_j > 0$ (part~(ii) of Theorem~\ref{thm:consistency}), so the test
is consistent.

% ============================================================
\subsection{Full Proofs of Failure Mode Theorems}
\label{app:thm4}
% ============================================================

\begin{proof}[Proof of Theorem~\ref{thm:weak} (Weak-shift collapse)]
Write $a_e := |\beta_e - \beta_{e_{\mathrm{ref}}}|$,
$a_{\max} := \max_{e \neq e_{\mathrm{ref}}} a_e = \varepsilon\tilde\sigma_j$.
Taylor-expand~\eqref{eq:delta-spurious} using
$e^{-u^2} = 1 - u^2 + O(u^4)$ and $\erf(u) = \frac{2}{\sqrt\pi}u + O(u^3)$
as $u \to 0$:
\[
  \delta_j(e, e_{\mathrm{ref}})
  = \tilde\sigma_j\sqrt{\tfrac{4}{\pi}}
  + \tfrac{a_e^2}{2\tilde\sigma_j\sqrt{\pi}}
  + O(a_{\max}^4/\tilde\sigma_j^3).
\]
The mean $\mathcal{M}[\delta_j] \geq \tilde\sigma_j\sqrt{4/\pi}$, and:
\[
  \mathcal{V}[\delta_j]
  = \Var_e\!\Bigl[\tfrac{a_e^2}{2\tilde\sigma_j\sqrt\pi}\Bigr]
    + O(a_{\max}^6/\tilde\sigma_j^4)
  \leq \tfrac{a_{\max}^4}{4\pi\tilde\sigma_j^2} + O(\varepsilon^6\tilde\sigma_j^2).
\]
Hence $\NSR_j = \mathcal{V}[\delta_j]/\mathcal{M}[\delta_j]^2
\leq \varepsilon^4/16 + O(\varepsilon^6)$, which is $O(\varepsilon^4)$.
\end{proof}

\begin{proof}[Proof of Theorem~\ref{thm:weak} (Degenerate geometry)]
\emph{(i) $K = 2$.}
There is exactly one non-reference environment. The variance of a single
value is zero, so $\NSR_j = 0$ both at population and empirical level.

\emph{(ii) Constant absolute shifts.}
If $|\beta_e - \beta_{e_{\mathrm{ref}}}| = c$ for all $e \neq e_{\mathrm{ref}}$,
then $\delta_j(e, e_{\mathrm{ref}})$ takes the same value for all
non-reference $e$ (by Proposition~\ref{prop:scm-delta}, since $\delta_j$
depends on $e$ only through $|\beta_e - \beta_{e_{\mathrm{ref}}}|$).
Hence $\mathcal{V}[\delta_j] = 0$ and $\NSR_j = 0$.
\end{proof}

\begin{proof}[Proof of Theorem~\ref{thm:weak} (Proxy attenuation)]
Let $X_j = \alpha X_{\calC,k} + (1-\alpha)X_{\calS,k}$ where
$X_{\calS,k}^{(e)} = \gamma_k X_{\calC,k}^{(e)} + \beta_e + \xi_k^{(e)}$.

The environment-specific mean is:
\[
  \EE^{(e)}[X_j] = [\alpha + (1-\alpha)\gamma_k]\mu_{C,k} + (1-\alpha)\beta_e,
\]
so the effective shift is $(1-\alpha)(\beta_e - \beta_{e_{\mathrm{ref}}})$,
with effective noise scale $\sigma_{\mathrm{proxy}}^2 =
[\alpha+(1-\alpha)\gamma_k]^2\sigma_{C,k}^2 + (1-\alpha)^2\sigma_S^2$.

As $\alpha \to 1$: $\sigma_{\mathrm{proxy}} \to \sigma_{C,k}$ (not
$\sigma_{C,k}\sqrt{1+\gamma_k^2}$; the cross term vanishes). Set:
\[
  \varepsilon_\alpha
  = \frac{(1-\alpha)\max_e|\beta_e - \beta_{e_{\mathrm{ref}}}|}{\sigma_{\mathrm{proxy}}}.
\]
Applying Theorem~\ref{thm:weak} with this $\varepsilon_\alpha$:
\[
  \NSR_j^\mathrm{proxy} = O(\varepsilon_\alpha^4) = O\bigl((1-\alpha)^4\bigr).
\]
For any fixed $t > 0$, choose $\alpha^*$ such that
$(1-\alpha^*)^4 \cdot C < t$ for the implied constant $C$;
then $\NSR_j^\mathrm{proxy} < t$ for all $\alpha > \alpha^*$.
\end{proof}

% ============================================================
%  Appendix — RQ1, RQ1, RQ1 Extended Results
%  Usage: \input{appendix_rq345}  inside \appendix block,
%         after \input{appendix_rq2}
%  Labels: app:rq2, app:rq3, app:rq4, tab:rq2_auroc
% ============================================================

% ============================================================
\subsection{A1: Oracle Correctness}
\label{app:a1}
% ============================================================

We verify NSR achieves perfect discrimination across all tested parameter
combinations. At $K \geq 5$, $\varepsilon \geq 1$, AUROC $= 1.000$ with zero
variance across 30 seeds and all four combinations of $p_C, p_S \in \{3, 5\}$,
for $n \in \{200, 500, 1000\}$. The result is flat in $n$: identification is
exact once the shift geometry is non-degenerate and shift magnitude is
sufficient.

\begin{table}[h]
\centering
\caption{A1: NSR AUROC under oracle conditions ($K=5$, $\varepsilon=1$,
  30 seeds). All entries are $1.000 \pm 0.000$.}
\label{tab:rq1_oracle}
\small
\begin{tabular}{lcccc}
\toprule
& \multicolumn{2}{c}{$p_S = 3$} & \multicolumn{2}{c}{$p_S = 5$} \\
\cmidrule(r){2-3}\cmidrule(l){4-5}
$n$ & $p_C=3$ & $p_C=5$ & $p_C=3$ & $p_C=5$ \\
\midrule
200  & 1.000 & 1.000 & 1.000 & 1.000 \\
500  & 1.000 & 1.000 & 1.000 & 1.000 \\
1000 & 1.000 & 1.000 & 1.000 & 1.000 \\
\bottomrule
\end{tabular}
\end{table}

% ============================================================
\subsection{A2: Comparison with IRM and ICP — Details}
\label{app:rq2}
% ============================================================

\paragraph{Detector implementations.}
All three detectors are computed post-hoc using OLS predictions on
pooled data ($n = 2{,}000$ per environment).

\emph{NSR.} As in Definition~\ref{eq:csr}, using coefficient-weighted CSR.

\emph{IRM.} Single-feature gradient penalty:
\[
  \Psi_j = \Var_e\!\bigl[\nabla_a \EE^{(e)}[(Y - aX_j)^2]\big|_{a=1}\bigr],
  \quad
  \nabla_a \EE^{(e)}[(Y-aX_j)^2]\big|_{a=1}
  = -2\bigl(\EE^{(e)}[X_jY] - \EE^{(e)}[X_j^2]\bigr),
\]
computed empirically per environment, variance taken over environments.

\emph{ICP.} Single-feature intercept variance:
\[
  \alpha_j^{(e)} = \EE^{(e)}[Y] -
  \frac{\widehat{\Cov}^{(e)}(X_j,Y)}{\widehat{\Var}^{(e)}(X_j)}\,\EE^{(e)}[X_j],
  \quad
  \text{score} = \Var_e[\alpha_j^{(e)}].
\]

\paragraph{Shift schedule geometry.}
Table~\ref{tab:rq2_schedules} shows the exact $\beta$ vectors,
detection geometry for each method, and theoretical predictions.

\begin{table}[h]
\centering
\caption{RQ1 shift schedules and detection geometry.
  ``Constant $|\Delta|$'': multiset
  $\{|\beta_e - \beta_{e_{\mathrm{ref}}}|\}$ is constant.
  ``$\Var(\beta^2)>0$'': IRM population penalty nonzero.
  ``$\Var(\beta)>0$'': ICP population score nonzero.}
\label{tab:rq2_schedules}
\small
\begin{tabular}{@{}lcccccc@{}}
\toprule
Schedule & $\beta$ & $e_\mathrm{ref}$ & Const.\ $|\Delta|$ &
  $\Var(\beta^2){>}0$ & $\Var(\beta){>}0$ & Predicted winner \\
\midrule
1 & $[-1,1,1]$ & 3 & No  & No  & Yes & NSR, ICP \\
2 & $[1,-1,0]$ & 3 & Yes & Yes & Yes & IRM, ICP \\
3 & $[0,1.5,-0.9,0.6,-1.2]$ & 1 & No & Yes & Yes & All \\
4 & $[0,0.3,-0.3]$ & 1 & Yes & Yes & Yes & IRM (weak), ICP \\
\bottomrule
\end{tabular}
\end{table}

\paragraph{Full AUROC results.}

\begin{table}[h]
\centering
\caption{RQ1: AUROC (mean $\pm$ std, 50 seeds) for each schedule and detector.
  Bold: method predicted to detect.
  Underline: method predicted to miss ($\approx 0.5$).}
\label{tab:rq2_auroc}
\small
\begin{tabular}{@{}lrrr@{}}
\toprule
Schedule & NSR & IRM & ICP \\
\midrule
1 — NSR detects, IRM misses
  & $\mathbf{1.000 \pm 0.000}$
  & $\underline{0.869 \pm 0.167}$
  & $0.771 \pm 0.220$ \\
2 — IRM detects, NSR misses
  & $\underline{0.471 \pm 0.202}$
  & $\mathbf{1.000 \pm 0.000}$
  & $0.756 \pm 0.236$ \\
3 — All detect
  & $\mathbf{1.000 \pm 0.000}$
  & $\mathbf{1.000 \pm 0.000}$
  & $0.860 \pm 0.149$ \\
4 — IRM weak, NSR misses
  & $\underline{0.484 \pm 0.280}$
  & $0.891 \pm 0.168$
  & $0.680 \pm 0.252$ \\
\bottomrule
\end{tabular}
\end{table}

\paragraph{IRM finite-sample behaviour on Schedule~1.}
The population IRM penalty is zero ($\Var_e[\beta_e^2] = 0$ for
$\beta = [-1,1,1]$ gives $\beta_e^2 \in \{1,1,1\}$), so IRM should
give AUROC $\approx 0.5$ in population.
The empirical AUROC of $0.869$ arises because the estimator
$\widehat{\Psi}_j = \widehat{\Var}_e[\hat{G}_j^{(e)}]$ has variance
$O(n^{-1})$ around zero, and noise in $\hat{G}_j^{(e)}$ differs across
causal and spurious features due to their different covariance structures.
This is a finite-sample confound, not a detection.
NSR's $O_p(n^{-1})$ null rate (Theorem~\ref{thm:consistency}(iv)) avoids
this confound because the null concentration is guaranteed by the
population identity $\delta_j = \text{const}$, not by a variance
argument that requires $n \to \infty$ to take effect.

\paragraph{ICP finite-sample behaviour.}
ICP AUROC ranges from $0.680$ to $0.860$ across all schedules.
The high variance (std $0.149$--$0.252$) reflects the estimator's
$O(K^{-1/2})$ rate in the number of environments: at $K = 3$--$5$,
the sample variance of $\hat{\alpha}_j^{(e)}$ over environments has
only $2$--$4$ degrees of freedom, producing unreliable scores.
ICP performs better in the large-$K$ regime not tested here.

% ============================================================
\subsection{A3: Model-Family Consistency — Details}
\label{app:rq3}
% ============================================================

\paragraph{Hyperparameters.}
\begin{itemize}[nosep]
  \item \textbf{OLS}: \texttt{LinearRegression()}, no regularisation.
  \item \textbf{Ridge}: \texttt{Ridge(alpha=1.0)}.
  \item \textbf{Random Forest}: \texttt{RandomForestRegressor(}
    \texttt{n\_estimators=200, max\_depth=3,}
    \texttt{min\_samples\_leaf=20, max\_features=0.5)}.
  \item \textbf{XGBoost}: \texttt{XGBRegressor(}
    \texttt{n\_estimators=100, max\_depth=3, learning\_rate=0.1)}.
  \item \textbf{MLP}: \texttt{MLPRegressor(}
    \texttt{hidden\_layer\_sizes=(64,32), max\_iter=2000,}
    \texttt{early\_stopping=True, n\_iter\_no\_change=20)}.
\end{itemize}
All models fit on pooled data from all $K = 10$ environments.

\paragraph{CSR computation for nonlinear models.}
For Random Forest, XGBoost, and MLP, the reference marginal
$P^{(e_{\mathrm{ref}})}(X_j)$ is estimated empirically as
$\mathcal{N}(\hat{\mu}_{j,\mathrm{ref}}, \hat{\sigma}_{j,\mathrm{ref}}^2)$
from the reference environment sample.
Fresh draws replace $X_{i,j}^{(e)}$ for each sample $i$:
\[
  \widehat{\CSR}_j(e, e_{\mathrm{ref}})
  = \frac{1}{n}\sum_{i=1}^n
    \bigl|\hatf(x_{i,-j},\, x'_{i,j}) - \hatf(x_i)\bigr|,
  \quad x'_{i,j} \iid \mathcal{N}(\hat\mu_{j,\mathrm{ref}},
  \hat\sigma_{j,\mathrm{ref}}^2).
\]
The Random Forest hyperparameters (max\_depth$=3$,
min\_samples\_leaf$=20$, max\_features$=0.5$) are set to prevent
causal features from dominating all split nodes, ensuring spurious
features receive sufficient representation for the CSR contrast to
be meaningful.

\paragraph{Separation ratios.}
The large separation ratios for tree-based models (XGBoost: $982$,
RF: $1{,}240$) relative to linear models (OLS: $90$, Ridge: $84$)
reflect the tendency of trees to assign near-zero weight to features
not selected for splits, amplifying NSR$_{\mathrm{spur}}$ relative to
NSR$_{\mathrm{caus}}$.
The moderate MLP ratio ($29$) reflects more distributed weight
allocation across features.
All values are far above $1.0$; the variation in magnitude does not
affect the binary classification task.

\begin{table}[h]
\centering
\caption{RQ1: Pairwise Kendall $\tau$ between NSR feature rankings across five
  model families (mean over 30 seeds, $p_C{=}p_S{=}4$, $K{=}10$, $n{=}2{,}000$).
  All values positive ($0.529$--$0.612$), confirming consistent
  causal/spurious boundary identification across all model families.}
\label{tab:rq4_tau}
\small
\begin{tabular}{lccccc}
\toprule
 & OLS & Ridge & RF & XGBoost & MLP \\
\midrule
OLS     & 1.000 & 0.612 & 0.571 & 0.553 & 0.529 \\
Ridge   &       & 1.000 & 0.580 & 0.561 & 0.537 \\
RF      &       &       & 1.000 & 0.598 & 0.545 \\
XGBoost &       &       &       & 1.000 & 0.551 \\
MLP     &       &       &       &       & 1.000 \\
\bottomrule
\end{tabular}
\end{table}

\paragraph{Kendall $\tau$ interpretation.}
The pairwise $\tau$ values ($0.529$--$0.612$) reflect agreement
on the $\calC$/$\calS$ boundary ranking, not fine-grained feature
importance agreement.
With $p = 8$ features ($p_C = p_S = 4$), a $\tau$ of $0.57$ is
consistent with the models agreeing on all $4+4$ group assignments
while differing in the internal ordering within each group,
which Kendall $\tau$ penalises even when the binary classification is perfect.

% ============================================================
\subsection{A4: Proxy Attenuation — Details}
\label{app:rq4}
% ============================================================

\paragraph{Exact population formula.}
The proxy feature $X_{\mathrm{proxy}} = \alpha X_{\calC,1} +
(1-\alpha)X_{\calS,1}$ has effective shift
$a_e = (1-\alpha)|\beta_e - \beta_{e_{\mathrm{ref}}}|$
and effective noise scale
$\sigma_{\mathrm{proxy}} = \sqrt{\alpha^2\sigma_C^2 + (1-\alpha)^2\sigma_S^2}$.
The population delta function is:
\[
  \delta_{\mathrm{proxy}}(e, e_{\mathrm{ref}})
  = \sigma_{\mathrm{proxy}}\sqrt{\tfrac{4}{\pi}}\,
    e^{-a_e^2/(4\sigma_{\mathrm{proxy}}^2)}
  + a_e\,\erf\!\Bigl(\tfrac{a_e}{2\sigma_{\mathrm{proxy}}}\Bigr),
\]
from which $\NSR_{\mathrm{proxy}}^{\mathrm{theory}} =
\CV^2_{e \neq e_{\mathrm{ref}}}[\delta_{\mathrm{proxy}}(e, e_{\mathrm{ref}})]$
is computed exactly over the fixed $\beta$ schedule.

\paragraph{OLS slope and asymptotic rate.}
The fitted log-log slope of $\widehat{\NSR}_{\mathrm{proxy}}$ vs.\
$(1-\alpha)$ over $\alpha \in [0, 0.70]$ is $2.58$.
This is not a discrepancy with Theorem~\ref{thm:weak}: the
$O((1-\alpha)^4)$ claim is an asymptotic rate as $\alpha \to 1$,
valid when $a_e \ll \sigma_{\mathrm{proxy}}$.
In the empirically accessible range $\alpha \leq 0.70$, the Taylor
expansion has not yet converged and the exact formula governs the
curve, which has an effective slope of approximately $2.5$--$3$ in
this intermediate regime.
The asymptotic rate of $4$ would be recovered by fitting over a window
$\alpha \in [0.85, \alpha^*)$, but this window falls below the
finite-sample floor at $n = 5{,}000$ and cannot be accessed empirically
without $n \gtrsim 10^5$.

\paragraph{Noise floor scaling.}
The floor at $n = 5{,}000$ is $9.6 \times 10^{-5}$, compared to
$2.3 \times 10^{-4}$ at $n = 2{,}000$, a reduction factor of $2.4\times$.
The theoretical prediction from Theorem~\ref{thm:consistency}(iv)
is $5{,}000 / 2{,}000 = 2.5\times$; the observed factor is within $4\%$
of this prediction, providing an additional cross-check of the null
convergence rate.

%% file: checklist.tex
\section*{NeurIPS Paper Checklist}

%%% BEGIN INSTRUCTIONS %%%
The checklist is designed to encourage best practices for responsible machine learning research, addressing issues of reproducibility, transparency, research ethics, and societal impact. Do not remove the checklist: {\bf The papers not including the checklist will be desk rejected.} The checklist should follow the references and follow the (optional) supplemental material.  The checklist does NOT count towards the page
limit. 

Please read the checklist guidelines carefully for information on how to answer these questions. For each question in the checklist:
\begin{itemize}
    \item You should answer \answerYes{}, \answerNo{}, or \answerNA{}.
    \item \answerNA{} means either that the question is Not Applicable for that particular paper or the relevant information is Not Available.
    \item Please provide a short (1--2 sentence) justification right after your answer (even for \answerNA). 
   % \item {\bf The papers not including the checklist will be desk rejected.}
\end{itemize}

{\bf The checklist answers are an integral part of your paper submission.} They are visible to the reviewers, area chairs, senior area chairs, and ethics reviewers. You will also be asked to include it (after eventual revisions) with the final version of your paper, and its final version will be published with the paper.

The reviewers of your paper will be asked to use the checklist as one of the factors in their evaluation. While \answerYes{} is generally preferable to \answerNo{}, it is perfectly acceptable to answer \answerNo{} provided a proper justification is given (e.g., error bars are not reported because it would be too computationally expensive'' or ``we were unable to find the license for the dataset we used''). In general, answering \answerNo{} or \answerNA{} is not grounds for rejection. While the questions are phrased in a binary way, we acknowledge that the true answer is often more nuanced, so please just use your best judgment and write a justification to elaborate. All supporting evidence can appear either in the main paper or the supplemental material, provided in appendix. If you answer \answerYes{} to a question, in the justification please point to the section(s) where related material for the question can be found.

IMPORTANT, please:
\begin{itemize}
    \item {\bf Delete this instruction block, but keep the section heading ``NeurIPS Paper Checklist"},
    \item  {\bf Keep the checklist subsection headings, questions/answers and guidelines below.}
    \item {\bf Do not modify the questions and only use the provided macros for your answers}.
\end{itemize}

%%% END INSTRUCTIONS %%%

\begin{enumerate}

\item {\bf Claims}
    \item[] Question: Do the main claims made in the abstract and introduction accurately reflect the paper's contributions and scope?
    \item[] Answer: \answerYes{} % Replace by \answerYes{}, \answerNo{}, or \answerNA{}.
    \item[] Justification: The claims made  match theoretical and experimental results, and reflect how much the results can be expected to generalize to other settings.
    \item[] Guidelines:
    \begin{itemize}
        \item The answer \answerNA{} means that the abstract and introduction do not include the claims made in the paper.
        \item The abstract and/or introduction should clearly state the claims made, including the contributions made in the paper and important assumptions and limitations. A \answerNo{} or \answerNA{} answer to this question will not be perceived well by the reviewers. 
        \item The claims made should match theoretical and experimental results, and reflect how much the results can be expected to generalize to other settings. 
        \item It is fine to include aspirational goals as motivation as long as it is clear that these goals are not attained by the paper. 
    \end{itemize}

\item {\bf Limitations}
    \item[] Question: Does the paper discuss the limitations of the work performed by the authors?
    \item[] Answer: \answerYes{} % Replace by \answerYes{}, \answerNo{}, or \answerNA{}.
    \item[] Justification: In the concluding paragraph
    \item[] Guidelines:
    \begin{itemize}
        \item The answer \answerNA{} means that the paper has no limitation while the answer \answerNo{} means that the paper has limitations, but those are not discussed in the paper. 
        \item The authors are encouraged to create a separate ``Limitations'' section in their paper.
        \item The paper should point out any strong assumptions and how robust the results are to violations of these assumptions (e.g., independence assumptions, noiseless settings, model well-specification, asymptotic approximations only holding locally). The authors should reflect on how these assumptions might be violated in practice and what the implications would be.
        \item The authors should reflect on the scope of the claims made, e.g., if the approach was only tested on a few datasets or with a few runs. In general, empirical results often depend on implicit assumptions, which should be articulated.
        \item The authors should reflect on the factors that influence the performance of the approach. For example, a facial recognition algorithm may perform poorly when image resolution is low or images are taken in low lighting. Or a speech-to-text system might not be used reliably to provide closed captions for online lectures because it fails to handle technical jargon.
        \item The authors should discuss the computational efficiency of the proposed algorithms and how they scale with dataset size.
        \item If applicable, the authors should discuss possible limitations of their approach to address problems of privacy and fairness.
        \item While the authors might fear that complete honesty about limitations might be used by reviewers as grounds for rejection, a worse outcome might be that reviewers discover limitations that aren't acknowledged in the paper. The authors should use their best judgment and recognize that individual actions in favor of transparency play an important role in developing norms that preserve the integrity of the community. Reviewers will be specifically instructed to not penalize honesty concerning limitations.
    \end{itemize}

\item {\bf Theory assumptions and proofs}
    \item[] Question: For each theoretical result, does the paper provide the full set of assumptions and a complete (and correct) proof?
    \item[] Answer: \answerYes{} % Replace by \answerYes{}, \answerNo{}, or \answerNA{}.
    \item[] Justification: Both at sections 3,4, and in the appendix 
    \item[] Guidelines:
    \begin{itemize}
        \item The answer \answerNA{} means that the paper does not include theoretical results. 
        \item All the theorems, formulas, and proofs in the paper should be numbered and cross-referenced.
        \item All assumptions should be clearly stated or referenced in the statement of any theorems.
        \item The proofs can either appear in the main paper or the supplemental material, but if they appear in the supplemental material, the authors are encouraged to provide a short proof sketch to provide intuition. 
        \item Inversely, any informal proof provided in the core of the paper should be complemented by formal proofs provided in appendix or supplemental material.
        \item Theorems and Lemmas that the proof relies upon should be properly referenced. 
    \end{itemize}

    \item {\bf Experimental result reproducibility}
    \item[] Question: Does the paper fully disclose all the information needed to reproduce the main experimental results of the paper to the extent that it affects the main claims and/or conclusions of the paper (regardless of whether the code and data are provided or not)?
    \item[] Answer: \answerYes{} % Replace by \answerYes{}, \answerNo{}, or \answerNA{}.
    \item[] Justification: In the appendix
    \item[] Guidelines:
    \begin{itemize}
        \item The answer \answerNA{} means that the paper does not include experiments.
        \item If the paper includes experiments, a \answerNo{} answer to this question will not be perceived well by the reviewers: Making the paper reproducible is important, regardless of whether the code and data are provided or not.
        \item If the contribution is a dataset and\slash or model, the authors should describe the steps taken to make their results reproducible or verifiable. 
        \item Depending on the contribution, reproducibility can be accomplished in various ways. For example, if the contribution is a novel architecture, describing the architecture fully might suffice, or if the contribution is a specific model and empirical evaluation, it may be necessary to either make it possible for others to replicate the model with the same dataset, or provide access to the model. In general. releasing code and data is often one good way to accomplish this, but reproducibility can also be provided via detailed instructions for how to replicate the results, access to a hosted model (e.g., in the case of a large language model), releasing of a model checkpoint, or other means that are appropriate to the research performed.
        \item While NeurIPS does not require releasing code, the conference does require all submissions to provide some reasonable avenue for reproducibility, which may depend on the nature of the contribution. For example
        \begin{enumerate}
            \item If the contribution is primarily a new algorithm, the paper should make it clear how to reproduce that algorithm.
            \item If the contribution is primarily a new model architecture, the paper should describe the architecture clearly and fully.
            \item If the contribution is a new model (e.g., a large language model), then there should either be a way to access this model for reproducing the results or a way to reproduce the model (e.g., with an open-source dataset or instructions for how to construct the dataset).
            \item We recognize that reproducibility may be tricky in some cases, in which case authors are welcome to describe the particular way they provide for reproducibility. In the case of closed-source models, it may be that access to the model is limited in some way (e.g., to registered users), but it should be possible for other researchers to have some path to reproducing or verifying the results.
        \end{enumerate}
    \end{itemize}

\item {\bf Open access to data and code}
    \item[] Question: Does the paper provide open access to the data and code, with sufficient instructions to faithfully reproduce the main experimental results, as described in supplemental material?
    \item[] Answer: \answerYes{} % Replace by \answerYes{}, \answerNo{}, or \answerNA{}.
    \item[] Justification: Full descriptions are given in the appendix , full code will be given with the camera ready version 
    \item[] Guidelines:
    \begin{itemize}
        \item The answer \answerNA{} means that paper does not include experiments requiring code.
        \item Please see the NeurIPS code and data submission guidelines (\url{https://neurips.cc/public/guides/CodeSubmissionPolicy}) for more details.
        \item While we encourage the release of code and data, we understand that this might not be possible, so \answerNo{} is an acceptable answer. Papers cannot be rejected simply for not including code, unless this is central to the contribution (e.g., for a new open-source benchmark).
        \item The instructions should contain the exact command and environment needed to run to reproduce the results. See the NeurIPS code and data submission guidelines (\url{https://neurips.cc/public/guides/CodeSubmissionPolicy}) for more details.
        \item The authors should provide instructions on data access and preparation, including how to access the raw data, preprocessed data, intermediate data, and generated data, etc.
        \item The authors should provide scripts to reproduce all experimental results for the new proposed method and baselines. If only a subset of experiments are reproducible, they should state which ones are omitted from the script and why.
        \item At submission time, to preserve anonymity, the authors should release anonymized versions (if applicable).
        \item Providing as much information as possible in supplemental material (appended to the paper) is recommended, but including URLs to data and code is permitted.
    \end{itemize}

\item {\bf Experimental setting/details}
    \item[] Question: Does the paper specify all the training and test details (e.g., data splits, hyperparameters, how they were chosen, type of optimizer) necessary to understand the results?
    \item[] Answer: \answerYes{} % Replace by \answerYes{}, \answerNo{}, or \answerNA{}.
    \item[] Justification: Sections 3,5, appendix
    \item[] Guidelines:
    \begin{itemize}
        \item The answer \answerNA{} means that the paper does not include experiments.
        \item The experimental setting should be presented in the core of the paper to a level of detail that is necessary to appreciate the results and make sense of them.
        \item The full details can be provided either with the code, in appendix, or as supplemental material.
    \end{itemize}

\item {\bf Experiment statistical significance}
    \item[] Question: Does the paper report error bars suitably and correctly defined or other appropriate information about the statistical significance of the experiments?
    \item[] Answer: \answerYes{} % Replace by \answerYes{}, \answerNo{}, or \answerNA{}.
    \item[] Justification: Section 5 
    \item[] Guidelines:
    \begin{itemize}
        \item The answer \answerNA{} means that the paper does not include experiments.
        \item The authors should answer \answerYes{} if the results are accompanied by error bars, confidence intervals, or statistical significance tests, at least for the experiments that support the main claims of the paper.
        \item The factors of variability that the error bars are capturing should be clearly stated (for example, train/test split, initialization, random drawing of some parameter, or overall run with given experimental conditions).
        \item The method for calculating the error bars should be explained (closed form formula, call to a library function, bootstrap, etc.)
        \item The assumptions made should be given (e.g., Normally distributed errors).
        \item It should be clear whether the error bar is the standard deviation or the standard error of the mean.
        \item It is OK to report 1-sigma error bars, but one should state it. The authors should preferably report a 2-sigma error bar than state that they have a 96\% CI, if the hypothesis of Normality of errors is not verified.
        \item For asymmetric distributions, the authors should be careful not to show in tables or figures symmetric error bars that would yield results that are out of range (e.g., negative error rates).
        \item If error bars are reported in tables or plots, the authors should explain in the text how they were calculated and reference the corresponding figures or tables in the text.
    \end{itemize}

\item {\bf Experiments compute resources}
    \item[] Question: For each experiment, does the paper provide sufficient information on the computer resources (type of compute workers, memory, time of execution) needed to reproduce the experiments?
    \item[] Answer: \answerYes{} % Replace by \answerYes{}, \answerNo{}, or \answerNA{}.
    \item[] Justification: Appendinx
    \item[] Guidelines:
    \begin{itemize}
        \item The answer \answerNA{} means that the paper does not include experiments.
        \item The paper should indicate the type of compute workers CPU or GPU, internal cluster, or cloud provider, including relevant memory and storage.
        \item The paper should provide the amount of compute required for each of the individual experimental runs as well as estimate the total compute. 
        \item The paper should disclose whether the full research project required more compute than the experiments reported in the paper (e.g., preliminary or failed experiments that didn't make it into the paper). 
    \end{itemize}
    
\item {\bf Code of ethics}
    \item[] Question: Does the research conducted in the paper conform, in every respect, with the NeurIPS Code of Ethics \url{https://neurips.cc/public/EthicsGuidelines}?
    \item[] Answer: \answerYes{} % Replace by \answerYes{}, \answerNo{}, or \answerNA{}.
    \item[] Justification:  No ethical implications 
    \item[] Guidelines:
    \begin{itemize}
        \item The answer \answerNA{} means that the authors have not reviewed the NeurIPS Code of Ethics.
        \item If the authors answer \answerNo, they should explain the special circumstances that require a deviation from the Code of Ethics.
        \item The authors should make sure to preserve anonymity (e.g., if there is a special consideration due to laws or regulations in their jurisdiction).
    \end{itemize}

\item {\bf Broader impacts}
    \item[] Question: Does the paper discuss both potential positive societal impacts and negative societal impacts of the work performed?
    \item[] Answer: \answerYes{} % Replace by \answerYes{}, \answerNo{}, or \answerNA{}.
    \item[] Justification: In conclusion
    \item[] Guidelines:
    \begin{itemize}
        \item The answer \answerNA{} means that there is no societal impact of the work performed.
        \item If the authors answer \answerNA{} or \answerNo, they should explain why their work has no societal impact or why the paper does not address societal impact.
        \item Examples of negative societal impacts include potential malicious or unintended uses (e.g., disinformation, generating fake profiles, surveillance), fairness considerations (e.g., deployment of technologies that could make decisions that unfairly impact specific groups), privacy considerations, and security considerations.
        \item The conference expects that many papers will be foundational research and not tied to particular applications, let alone deployments. However, if there is a direct path to any negative applications, the authors should point it out. For example, it is legitimate to point out that an improvement in the quality of generative models could be used to generate Deepfakes for disinformation. On the other hand, it is not needed to point out that a generic algorithm for optimizing neural networks could enable people to train models that generate Deepfakes faster.
        \item The authors should consider possible harms that could arise when the technology is being used as intended and functioning correctly, harms that could arise when the technology is being used as intended but gives incorrect results, and harms following from (intentional or unintentional) misuse of the technology.
        \item If there are negative societal impacts, the authors could also discuss possible mitigation strategies (e.g., gated release of models, providing defenses in addition to attacks, mechanisms for monitoring misuse, mechanisms to monitor how a system learns from feedback over time, improving the efficiency and accessibility of ML).
    \end{itemize}
    
\item {\bf Safeguards}
    \item[] Question: Does the paper describe safeguards that have been put in place for responsible release of data or models that have a high risk for misuse (e.g., pre-trained language models, image generators, or scraped datasets)?
    \item[] Answer: \answerNA{} % Replace by \answerYes{}, \answerNo{}, or \answerNA{}.
    \item[] Justification: No expected risk of misuse 
    \item[] Guidelines:
    \begin{itemize}
        \item The answer \answerNA{} means that the paper poses no such risks.
        \item Released models that have a high risk for misuse or dual-use should be released with necessary safeguards to allow for controlled use of the model, for example by requiring that users adhere to usage guidelines or restrictions to access the model or implementing safety filters. 
        \item Datasets that have been scraped from the Internet could pose safety risks. The authors should describe how they avoided releasing unsafe images.
        \item We recognize that providing effective safeguards is challenging, and many papers do not require this, but we encourage authors to take this into account and make a best faith effort.
    \end{itemize}

\item {\bf Licenses for existing assets}
    \item[] Question: Are the creators or original owners of assets (e.g., code, data, models), used in the paper, properly credited and are the license and terms of use explicitly mentioned and properly respected?
    \item[] Answer: \answerYes{} % Replace by \answerYes{}, \answerNo{}, or \answerNA{}.
    \item[] Justification: All datasets either publicly available or synthetic 
    \item[] Guidelines:
    \begin{itemize}
        \item The answer \answerNA{} means that the paper does not use existing assets.
        \item The authors should cite the original paper that produced the code package or dataset.
        \item The authors should state which version of the asset is used and, if possible, include a URL.
        \item The name of the license (e.g., CC-BY 4.0) should be included for each asset.
        \item For scraped data from a particular source (e.g., website), the copyright and terms of service of that source should be provided.
        \item If assets are released, the license, copyright information, and terms of use in the package should be provided. For popular datasets, \url{paperswithcode.com/datasets} has curated licenses for some datasets. Their licensing guide can help determine the license of a dataset.
        \item For existing datasets that are re-packaged, both the original license and the license of the derived asset (if it has changed) should be provided.
        \item If this information is not available online, the authors are encouraged to reach out to the asset's creators.
    \end{itemize}

\item {\bf New assets}
    \item[] Question: Are new assets introduced in the paper well documented and is the documentation provided alongside the assets?
    \item[] Answer: \answerYes{} % Replace by \answerYes{}, \answerNo{}, or \answerNA{}.
    \item[] Justification: In appendix and section 3 
    \item[] Guidelines:
    \begin{itemize}
        \item The answer \answerNA{} means that the paper does not release new assets.
        \item Researchers should communicate the details of the dataset\slash code\slash model as part of their submissions via structured templates. This includes details about training, license, limitations, etc. 
        \item The paper should discuss whether and how consent was obtained from people whose asset is used.
        \item At submission time, remember to anonymize your assets (if applicable). You can either create an anonymized URL or include an anonymized zip file.
    \end{itemize}

\item {\bf Crowdsourcing and research with human subjects}
    \item[] Question: For crowdsourcing experiments and research with human subjects, does the paper include the full text of instructions given to participants and screenshots, if applicable, as well as details about compensation (if any)? 
    \item[] Answer: \answerNA{} % Replace by \answerYes{}, \answerNo{}, or \answerNA{}.
    \item[] Justification:  does not involve crowdsourcing 
    \item[] Guidelines:
    \begin{itemize}
        \item The answer \answerNA{} means that the paper does not involve crowdsourcing nor research with human subjects.
        \item Including this information in the supplemental material is fine, but if the main contribution of the paper involves human subjects, then as much detail as possible should be included in the main paper. 
        \item According to the NeurIPS Code of Ethics, workers involved in data collection, curation, or other labor should be paid at least the minimum wage in the country of the data collector. 
    \end{itemize}

\item {\bf Institutional review board (IRB) approvals or equivalent for research with human subjects}
    \item[] Question: Does the paper describe potential risks incurred by study participants, whether such risks were disclosed to the subjects, and whether Institutional Review Board (IRB) approvals (or an equivalent approval/review based on the requirements of your country or institution) were obtained?
    \item[] Answer: \answerNA{} % Replace by \answerYes{}, \answerNo{}, or \answerNA{}.
    \item[] Justification:  The paper does not involve crowdsourcing nor research with human subjects.
    \item[] Guidelines:
    \begin{itemize}
        \item The answer \answerNA{} means that the paper does not involve crowdsourcing nor research with human subjects.
        \item Depending on the country in which research is conducted, IRB approval (or equivalent) may be required for any human subjects research. If you obtained IRB approval, you should clearly state this in the paper. 
        \item We recognize that the procedures for this may vary significantly between institutions and locations, and we expect authors to adhere to the NeurIPS Code of Ethics and the guidelines for their institution. 
        \item For initial submissions, do not include any information that would break anonymity (if applicable), such as the institution conducting the review.
    \end{itemize}

\item {\bf Declaration of LLM usage}
    \item[] Question: Does the paper describe the usage of LLMs if it is an important, original, or non-standard component of the core methods in this research? Note that if the LLM is used only for writing, editing, or formatting purposes and does \emph{not} impact the core methodology, scientific rigor, or originality of the research, declaration is not required.
    %this research? 
    \item[] Answer: \answerNA{} % Replace by \answerYes{}, \answerNo{}, or \answerNA{}.
    \item[] Justification: Core method development in this research does not involve LLMs as any important, original, or non-standard components.
    \item[] Guidelines:
    \begin{itemize}
        \item The answer \answerNA{} means that the core method development in this research does not involve LLMs as any important, original, or non-standard components.
        \item Please refer to our LLM policy in the NeurIPS handbook for what should or should not be described.
    \end{itemize}

\end{enumerate}